\newtheorem{proposition}{Proposition}
\newtheorem{Definition}{Definition}
\title{
Stability Analysis Framework for Particle-based Distance GANs with Wasserstein Gradient Flow}
\author{%
	Chuqi CHEN$^{a}$\\
	\texttt{cchenck@connect.ust.hk}
	\And
	Yue WU$^{a}$\\
	\texttt{ywudb@connect.ust.hk}
	\AND
	Yang XIANG $^{a,b,}$ \thanks{Corresponding author.} \\
        \texttt{maxiang@ust.hk} \\
	$^{a}$ \small\textit {Department of Mathematics} \\
        \small\textit {Hong Kong University of Science and Technology}\\
        \small\textit {Clear Water Bay, Hong Kong SAR, China}\\
        $^{b}$\small\textit{Algorithms of Machine Learning and Autonomous Driving Research Lab}\\
        \small\textit{HKUST Shenzhen-Hong Kong Collaborative Innovation Research Institute}\\
        \small\textit{Futian, Shenzhen, China}
}
\begin{document}

\maketitle

\begin{abstract}
In this paper, we investigate the training process of generative networks that use a type of probability density distance named particle-based distance as the objective function, e.g. MMD GAN, Cramér GAN, EIEG GAN. However, these GANs often suffer from the problem of unstable training. In this paper, we analyze the stability of the training process of these GANs from the perspective of probability density dynamics. In our framework, we regard the discriminator $D$ in these GANs as a feature transformation mapping that maps high dimensional data into a feature space, while the generator $G$ maps random variables to samples that resemble real data in terms of feature space. This perspective enables us to perform stability analysis for the training of GANs using the Wasserstein gradient flow of the probability density function. We find that the training process of the discriminator is usually unstable due to the formulation of $\min_G \max_D E(G, D)$ in GANs. To address this issue, we add a stabilizing term in the discriminator loss function. We conduct experiments to validate our stability analysis and stabilizing method.
\end{abstract}


\section{Introduction}

Generative Adversarial Networks (GANs) \cite{refGAN} have emerged as a prominent framework for generative modeling in recent years, finding applications across a wide range of fields, including image style transformation \cite{refI2IGAN}, super-resolution \cite{refsuperreso}, and 3D object generation \cite{ref3Dgan} etc. In the GANs framework, there are two networks involved: the generator and the discriminator. The generator $G$ is trained to map a random variable, typically drawn from a normal distribution i.e., $\bm{z} \sim \mathcal{N}(\bm{0}, I)$, to samples that resemble those from the data distribution (i.e., $G(\bm{z}) \sim \mathbb{P}_{\mathrm{data}}$). The discriminator $D$, on the other hand, is trained to evaluate the scores $D(\bm{x}) \in \mathbb{R}^d$ of real or generated samples $\bm{x}$. Together, the generator and discriminator networks are trained iteratively to improve the quality of generated samples until the generator is able to produce samples that gain the same score from the discriminator. The standard formulation of GANs is given by  
\begin{equation}
    \min_{G} \max_{D} E(G, D),
\end{equation}
where $\min$ and $\max$ of objective function $E$ with respect to $G$ and $D$  are taken over the set of generator and discriminator functions. 

Within the GANs framework, different probabilistic metrics can be used to define various objective functions for different GANs. For example, the original GAN \cite{refGAN} uses the JS divergence, and the WGAN \cite{refWGAN} uses the Wasserstein distance. Other metrics include Cramér distance used by Cramér GAN \cite{refcramerGAN},  the maximum mean discrepancy (MMD) used by the MMD GAN \cite{refMMDGAN1,refMMDGAN2}, and the elastic interaction energy-based metric used by the EIEG GAN \cite{refEIEGGAN}. In this paper, we focus on the latter three GANs and introduce a unified expression for the probability density distance in these models, which we refer to as the particle-based distance. Furthermore, we analyze the stability of their training process using the Wasserstein gradient flow. 

Our motivations originate from molecular dynamics \cite{refXIANG,refLUO} where we consider samples from $\mathbb{P}_{\mathrm{data}}$ and generated samples from $\mathbb{P}_{g}$ as a system of interacting particles. The corresponding particle-based distance between the two distributions can be considered as the potential energy of this system. Under this framework, we treat the training process of the discriminator and generator as a process of evolution of the particles. We analyze the stability of the training by analyzing the density evolution equation for the particles, which is  
Wasserstein gradient flow based on particle-based distance. Our analysis shows that the training process of the discriminator is often unstable under the $\min_{G} \max_{D} E(G, D)$ formulation of GANs. To address this issue, we propose an additional stabilizing term in the discriminator loss function.

To summarize, our contributions can be stated as follows:
\begin{itemize}
\item In Section \ref{section3}, we propose a new framework for analyzing the training process of particle-based distance GANs using Wasserstein gradient flow. The training stability is determined by the corresponding perturbation evolution equation. Our analysis reveals that the training of discriminator is always unstable.
\item  To address the unstable training issue of the discriminator in these GANs, we introduce a stabilizing term in the discriminator loss in Section \ref{section4}.
\item In Section \ref{section6}, we conduct experiments to validate our analysis and the proposed stabilizing method.
\end{itemize}

Finally, we study the connection to existing works in Section \ref{section5} and discuss the potential for extending our method of proving stability using functional gradient flow to other types of GAN models.

\section{Preliminaries}

\paragraph{Notation.} In this paper, we use $p_r(\bm{x})$ to denote the probability density function corresponding to the data distribution $\mathbb{P}_{\mathrm{data}}$, $p_g(\bm{x})$ for the generated data distribution $\mathbb{P}_{g}$, and $p_f(\bm{x})$ for the distribution of the samples in feature space $\mathbb{P}_{f}$. Without ambiguity, $\nabla$ stands for $\nabla_{\bm{x}}$ for conciseness. 
For GANs, $D_{\phi}$ denotes the discriminator neural network parameterized by $\phi$, and $G_{\theta}$ denotes the generator neural network parameterized by $\theta$. Notation $\|\cdot\|$ denotes the $l_2$ norm on $\mathbb{R}^d$.

\paragraph{Generative Adversarial Networks (GANs) with particle-based distance.}  
We denote the probability density distance of Cramér GAN \cite{refcramerGAN}, MMD GAN \cite{refMMDGAN1,refMMDGAN2}, EIEG GAN \cite{refEIEGGAN} in a unified form named particle-based distance.

\begin{Definition}[Particle-based distance]
Consider two probability density functions $p(\bm{x}), q(\bm{x}):  \mathbb{R}^{n} \mapsto \mathbb{R}$, the particle-based distance between these two distributions is
\begin{equation}
E[p(\bm{x}),q(\bm{x}) ] =\int_{\mathbb{R}^{n}}  \int_{\mathbb{R}^{n}} e(\bm{x},\bm{x})(p(\bm{x}) - q(\bm{x})) (p(\bm{y}) - q(\bm{y})) d \Omega_{\bm{x}} d \Omega_{\bm{y},}
\label{Eqn: Particle_metric}
\end{equation} 
where $e(\bm{x},\bm{y})$ stands for a type of distance between $\bm{x}$ and $\bm{y}$ with $e(\bm{x},\bm{y}) \geq 0$.
\end{Definition}
The distance $e(\bm{x}, \bm{y})$ can be specified for the following GAN variants:
\begin{itemize}
    \item Cramér GAN \cite{refcramerGAN} uses a distance function given by
          \begin{equation}
          e(\bm{x},\bm{y}) = \|\bm{x}-\bm{z}_0\| + \|\bm{y}-\bm{z}_0\| -\|\bm{x}-\bm{y}\|,
          \label{Eqn: CGAN }
           \end{equation}
           for any choice of $\bm{z}_0 \in \mathbb{R}^n$. ( $\bm{z}_0 = \bm{0} $ is often chosen to simplify notation \cite{refrefcramerker}).

    \item  MMD GAN with Gaussian RBF kernel \cite{refMMDGAN1} uses a distance function given by
          \begin{equation}
           e(\bm{x},\bm{y}) = k_{\sigma}^{rbf}\left(\bm{x}, \bm{y}\right)=\exp \left(-\frac{1}{2\sigma^2}\left\|\bm{x}-\bm{y}\right\|^2\right),
           \label{Eqn: MMD Gua Kernel}
           \end{equation}
     where $\sigma$ is a scaling factor.
     \item MMD GAN with rational quadratic kernel \cite{refMMDGAN2} uses a distance function given by
           \begin{equation}
            e(\bm{x},\bm{y}) = k_{\alpha}^{rq}\left(\bm{x}, \bm{y}\right)= (1 + \frac{\|\bm{x} - \bm{y}\|^2}{2\alpha})^{-\alpha},
           \label{Eqn: MMD }
           \end{equation}
     where $\alpha$ is a scaling factor.
    \item  EIEG GAN \cite{refEIEGGAN} uses a distance function given by
           \begin{equation}
           e(\bm{x},\bm{y}) = \frac{1}{\|\bm{x}-\bm{y}\|^{n-1}},
           \label{Eqn: EIEG }
            \end{equation}
    where $n$ is the dimension of $\bm{x}$ and $\bm{y}$.
\end{itemize}
\textbf{Remark.} Given a type of distance $e(\bm{x},\bm{y}) \geq 0$ between $\bm{x}$ and $\bm{y}$, then $E[p(\bm{x}),q(\bm{x})] = 0$ if and only if $p(\bm{x}) = q(\bm{x})$.

The proposed particle-based distance can be written in the following form:
\begin{equation}
        E[p,q] = - 2\mathbb{E}_{\bm{x} \sim p}\mathbb{E}_{\bm{y} \sim q}e(\bm{x},\bm{y}) +\mathbb{E}_{\bm{x} \sim p}\mathbb{E}_{\bm{y} \sim p}e(\bm{x},\bm{y}) + \mathbb{E}_{\bm{x} \sim q}\mathbb{E}_{\bm{y} \sim q}e(\bm{x},\bm{y}).
\label{Eqn: Particle_metric_epe}
\end{equation}

In Eq. (\ref{Eqn: Particle_metric_epe}), the first term captures the interaction energy between samples from different distributions. The second and third terms, on the other hand, represent the self-energy of samples within their distributions, respectively. 

The objective function for GAN variants based on the particle-based distance is
\begin{equation}
\begin{aligned}
E(G,D) = &
-2 \mathbb{E}_{\bm{x} \sim \mathbb{P}_{\mathrm{data}}, \bm{z} \sim \mathcal{N}(\bm{0},I)}[e_{D}(\bm{x}, G(\bm{z}))]\\
&  +\mathbb{E}_{\bm{x}, \bm{x}^{\prime} \sim \mathbb{P}_{\mathrm{data}}}\left[e_{D}\left(\bm{x}, \bm{x}^{\prime}\right)\right]+\mathbb{E}_{\bm{z}, \bm{z}^{\prime} \sim \mathcal{N}(\bm{0},I)}\left[e_{D}\left(G(\bm{z}), G(\bm{z}^{\prime})\right)\right],
\end{aligned}
\label{Eqn: Objective Energy distance GAN}
\end{equation}
where $e_D(\bm{x}, \bm{y}) = e(D(\bm{x}), D(\bm{y}))$. 

\paragraph{Wasserstein gradient flow and particle dynamics.} Given a target distribution $p_{*}(\bm{x})$, and a distance between $p_{t}(\bm{x})$ and $p_{*}(\bm{x})$, i.e.,  $E[p_{t}(\bm{x}), p_{*}(\bm{x})]$, a Wasserstein gradient flow is a curve for $p_{t}(\bm{x})$ following the direction of the steepest descent of a functional
$E(p_{t}(\bm{x})) = E[p_{t}(\bm{x}),p_{*}(\bm{x})]$, which leads $p_t(\bm{x})$ converges to $p_{*}(\bm{x})$. 

\begin{Definition}[Wasserstein gradient flow \cite{refFPeqn}]
Given an energy functional $E(p_t(\bm{x}))$, the Wasserstein gradient flow of density function $p_t(\bm{x})$ is defined as
    \begin{equation}
         \frac{\partial p_t}{\partial t} = -\nabla_{W} E(p
         _t)=\nabla \cdot\left(p_t \nabla \frac{\delta E}{\delta p_t}\right),
    \label{Eqn: Wflow_pre}
    \end{equation}
   where $\nabla_{W} E(p_{t})$ is the first variation of the functional $E(p_{t})$ in  Wasserstein spaces and $\frac{\delta E}{\delta p_{t}}$ is the first variation of the functional $E(p_{t})$ in Hilbert spaces.
\end{Definition}

The Wasserstein gradient flow possesses a physical interpretation in molecular dynamics. 
When particles are initially distributed according to $X_0 \sim p_0$, the distribution $p_t(\bm{x})$ of the particles will approach $p_*(\bm{x})$ as $t \rightarrow \infty$, following the dynamics described by the equation
\begin{equation}
dX_t = - \bigg( \nabla \frac{\delta E}{\delta p_t}\bigg) dt, \quad X_0 \sim p_0.
\label{Eqn: particle flow}   
\end{equation}

Specifically, Eq. (\ref{Eqn: particle flow}) defines the evolution equation of the particle $X_t$ whose density distribution $p_t$ satisfies Eq. (\ref{Eqn: Wflow_pre}). The stability of the particle dynamics (Eq. (\ref{Eqn: particle flow})) is consistent with the stability of the evolution equation of its corresponding density distribution (Eq. (\ref{Eqn: Wflow_pre})). Based on this, we propose our framework for analyzing the training stability of particle-based distance GANs.

\section{Stability Analysis} \label{section3}
\subsection{Analysis framework}
\paragraph{Stability of training dynamics.} 
A training dynamics is stable if the perturbations appearing at some time during the training do not cause the perturbations to be magnified as the training is continued.
That is, the training dynamics is stable if the perturbations decay and eventually damp out as the training is carried forward. 
Conversely, if the perturbations grow over time, the training dynamics is unstable.
A neutrally stable training dynamics is one in which the perturbations remain constant as the training progresses.

The framework of our stability analysis is as follows. The Wasserstein gradient flow of the particle-based distance $E(p_t(\bm{x})) = E[p_t(\bm{x}),p_{*}(x)]$ is
    \begin{equation}
         \frac{\partial p_t}{\partial t} = \nabla \cdot\left(p_t \nabla \frac{\delta E}{\delta p_t}\right).
    \label{Eqn: Wflow}
    \end{equation}

Consider at a fixed point $\bm{x}_0$ with a perturbation $v$, near the point $\bm{x}_0$, $p_{t} = p_{t}(\bm{x}_0) + v$, where $v \ll 1$ is a small perturbation. 
Substituting it into the Eq. (\ref{Eqn: Wflow}), and since $v \ll 1$ we keep only the linear terms of $v$, which gives
  
 \begin{equation}
    \frac{\partial v}{\partial t} = \mathcal{A}v,
\label{Eqn perturb}
 \end{equation}   
where $\mathcal{A}$ is linear operator on perturbation function $v$. 
Through the evolution of the perturbation $v$ Eq. (\ref{Eqn perturb}), if $|v| \rightarrow \infty$ as $t \rightarrow \infty$ the dynamics is unstable. Conversely, if $|v| \rightarrow 0$, as $t \rightarrow \infty$ the dynamics is stable. If $|v|$ remains constant, the dynamics is considered to be neutrally stable.

\subsection{Training stability analysis} \label{section3.2}
In our framework, the stability analysis of the training dynamics of particle-based distance GANs is based on the evolution equation of the distribution density, which is the Wasserstein gradient flow of the particle-based distance (Eq. (\ref{Eqn: Particle_metric})).

We view the discriminator as a feature transformation mapping that projects the high-dimensional data space into a low-dimensional feature space in our framework.
To be specific, the discriminator with parameters $\phi$ maps data samples $\bm{x} \sim \mathbb{P}_{\mathrm{data}}$ and the generated samples $G_{\theta}(\bm{z}) \sim \mathbb{P}_{g}$ to samples in the feature space, represented by $D_{\phi}(\bm{x}) \sim \mathbb{P}_{f_{\mathrm{data}}}$ and $D_{\phi}(G_{\theta}(\bm{z})) \sim \mathbb{P}_{f_g}$, respectively. 
According to this understanding,
the generator $G_{\theta}$ is trained to generate samples whose distribution matches the distribution of the data in terms of feature space. 
Specifically, the generator $G_{\theta}$ maps random variables $\bm{z} \sim \mathcal{N}(\bm{0},I)$ to samples, such that $\mathbb{P}_{f_{g}}$ approximates $\mathbb{P}_{f_{\mathrm{data}}}$.
Such an understanding is also proposed in EIEG GAN \cite{refEIEGGAN}, in which the elastic discriminator maps the data into a two-dimensional feature space, while the generator is trained to minimize the elastic interaction energy-based distance between $p_{f_{\mathrm{data}}}$ and $p_{f_g}$ in  the feature space. More discussion can be found in Appendix \ref{Appendix A}

The minmax formulation of GANs $\min_{G} \max_{D} E(G, D)$ 
is usually solved iteratively using gradient descent.
We update the parameters of the discriminator network $D$ to maximize its objective function $\max_{D} \mathcal{L}_{D}$ while keeping the generator network $G$ fixed. Then, we update the parameters of the generator network $G$ to minimize the objective function $\min_{G} \mathcal{L}_G$ while keeping the discriminator network $D$ fixed. We repeat this process for several iterations until convergence or until a stopping criterion is met.

We interpret the training process of these GANs as particle dynamics. 
Starting from the objective function based on the particle-based distance (Eq. (\ref{Eqn: Objective Energy distance GAN})),
the loss function for the generator is
\begin{equation}
    \min_{G} \mathcal{L}_{G} = -2 \mathbb{E}_{\bm{x} \sim \mathbb{P}_{\mathrm{data}}, \bm{z} \sim \mathcal{N}(\bm{0},I)} e_D (\bm{x}, G(\bm{z})) +\mathbb{E}_{\bm{z},\bm{z}^{\prime} \sim \mathcal{N}(\bm{0},I)}e_D \left(G (\bm{z}), G (\bm{z}^{\prime})\right),
\label{Eqn. loss of GEN}
 \end{equation}
with a fixed $D$. 
Correspondingly, the evolution of the generated sample dynamics in feature space is
\begin{equation}
    dX_t = \left[ - 2\mathbb{E}_{\bm{y} \sim \mathbb{P}_{f_{\mathrm{data}}}} (\nabla e(X_t,\bm{y})) + 2\mathbb{E}_{\bm{y} \sim \mathbb{P}_{f_{g}}} (\nabla e(X_t,\bm{y})) \right]dt,   \quad X_0 \sim \mathbb{P}_{f_{\mathcal{N}(\bm{0},I)}}.
\label{Eqn: Gen particle flow}    
\end{equation}
The Wasserstein gradient flow is
\begin{equation}
\begin{aligned}
      \frac{\partial p_{f_{g}} }{\partial t} &= \nabla \cdot\left(p_{f_g} \nabla \frac{\delta E}{\delta p_{f_g}}\right)
      =\nabla \cdot\left(p_{f_g} \nabla \bigg(2\mathbb{E}_{\bm{y} \sim \mathbb{P}_{f_{g}}}e(\bm{x}, \bm{y})- 2\mathbb{E}_{\bm{y} \sim \mathbb{P}_{f_{\mathrm{data}}}}e(\bm{x}, \bm{y})\bigg)\right),
\end{aligned}
\label{Eqn: Gen Wg flow}
\end{equation}
where  $E = E[p_{f_g},p_{f_{\mathrm{data}}}]$ represents the particle-based distance between $p_{f_g}$ and $p_{f_{\mathrm{data}}}$, which correspond to the probability density functions of generated and real samples in feature space, respectively.
On the other hand, the loss function for the discriminator is
\begin{equation}
\begin{aligned}
   \max_{D}  \mathcal{L}_D = -2 \mathbb{E}_{\bm{x} \sim \mathbb{P}_{\mathrm{data}}, \bm{y} \sim \mathbb{P}_{g}}e_{D}(\bm{x}, \bm{y})+\mathbb{E}_{\bm{x},\bm{x}^{\prime}\sim \mathbb{P}_{\mathrm{data}}}e_{D}\left(\bm{x}, \bm{x}^{\prime}\right)+\mathbb{E}_{\bm{y},\bm{y}^{\prime} \sim \mathbb{P}_{g}}e_{D}\left(\bm{y}, \bm{y}^{\prime}\right).
\end{aligned}
\label{Eqn. Loss of Dis}
\end{equation}
With $G$ fixed, the evolution of the generated samples dynamics in feature space is
\begin{equation}
    dX_t = \left[ 2\mathbb{E}_{\bm{y} \sim \mathbb{P}_{\mathrm{data}}} (\nabla e(X_t, \bm{y})) - 2\mathbb{E}_{\bm{y} \sim \mathbb{P}_{g}} (\nabla e(X_t,\bm{y})) \right]dt,   \quad X_0 \sim \mathbb{P}_{f_{g}}.
\label{Eqn: Discri particle flow}    
\end{equation}
And the Wasserstein gradient flow is
\begin{equation}
\begin{aligned}
      \frac{\partial p_{f_{g}} }{\partial t} &= -\nabla \cdot\left(p_{f_g} \nabla \frac{\delta E}{\delta p_{f_g}}\right) 
      = -\nabla \cdot\left(p_{f_g} \nabla \bigg(2\mathbb{E}_{\bm{y} \sim \mathbb{P}_{f_{g}}}e(\bm{x},\bm{y})- 2\mathbb{E}_{\bm{y} \sim \mathbb{P}_{f_{\mathrm{data}}}}
      e(\bm{x},\bm{y})\bigg)\right).
\end{aligned}
\label{Eqn: Discri Wg flow}
\end{equation}

It is worth noticing that the only difference between the Wasserstein gradient flow for the generator (Eq. (\ref{Eqn: Gen Wg flow})) and that for the discriminator (Eq. (\ref{Eqn: Discri Wg flow})) is their evolution direction, i.e., different sign in Eq. (\ref{Eqn: Gen Wg flow}) and Eq. (\ref{Eqn: Discri Wg flow}) due to $\min_{G} \mathcal{L}_G$ and $\max_{D} \mathcal{L}_D$. This is attributed to the min-max formulation $\min_{G} \max_{D} E(G, D)$ of GANs in which if one evolution direction is stable, the other direction is unstable.

\subsection{Results} \label{section3.3}

We use the analysis framework described above to investigate the training stability of particle-based distance GANs. Central to this analysis is Eq. (\ref{Eqn perturb}), which defines the perturbation dynamics from the Wasserstein gradient flow Eq. (\ref{Eqn: Wflow}). We find that the evolution equation for the perturbation $v$ in Fourier spaces always takes the form 
\begin{equation}
        \frac{d \hat{v}}{dt} = \mp C(2\pi)^2|\bm{\xi}|^{2}\hat{v}\mathcal{F}(e(\|\bm{x}\|)).
\label{Eqn: perturbation general}
\end{equation}
The constant $C \geq 0$ is associated with $p_{f_{\mathrm{data}}}$. In the context of GANs, the negative sign indicates the perturbation dynamics of the generator, while the positive sign indicates the perturbation dynamics of the discriminator. The function $e(\bm{x},\bm{y})$ can be expressed as $e(\|\bm{x}-\bm{y}\|)$. $\mathcal{F}(e(\|\bm{x}\|))$ denotes the Fourier transform\footnote[1]{Here we define the Fourier transform of $f(\bm{x})$ as $\mathcal{F}(f(\bm{x}))(\bm{\xi})=\int_{\mathbb{R}^n}f(\bm{x}) e^{-i 2 \pi(\bm{\xi}\cdot \bm{x})} d\bm{x}$} of $e(\|\bm{x}\|)$.
We use $\hat{v}$ to denote the Fourier transform of $v$, and $\bm{\xi}$ to represent the Fourier mode.

As shown in Eq. (\ref{Eqn: perturbation general}), if $\mathcal{F}(e(\|\bm{x}\|))(\bm{\xi}) > 0$ for all $\bm{\xi}$, the training dynamics of the generator is stable, with $\hat{v} \rightarrow 0$ as $t \rightarrow \infty$, and the corresponding training of the discriminator is unstable. Conversely, if $\mathcal{F}(e(\|\bm{x}\|))(\bm{\xi}) < 0$ for all $\bm{\xi}$, the situation is reversed, with unstable generator training and stable discriminator training. If the sign of $\mathcal{F}(e(\|\bm{x}\|))(\bm{\xi})$ depends on $|\bm{\xi}|$, both the generator and discriminator training are unstable for some value of $\bm{\xi}$. This result provide valuable insights into the stability of various particle-based distance GANs, and can guide the development of new stabilizing methods.

The stability analysis results for Cramér GAN \cite{refcramerGAN}, MMD GAN \cite{refMMDGAN1,refMMDGAN2}, and EIEG GAN \cite{refEIEGGAN} based on our framework are presented in Table \ref{Table 1}. For Cramér GAN, we observe that the training of the generator is stable, while the training of the discriminator is unstable. Similarly, for MMD GAN with a Gaussian RBF kernel, the training of the generator is stable, but the training of the discriminator is unstable. In the case of MMD GAN with a rational quadratic kernel, the situation is more complex, as the function $\mathcal{F}((1 + \frac{\|\bm{x}\|^2}{2\alpha})^{-\alpha})$ takes diverse forms for different values of $\alpha$, which leads to the training stability depending on $\alpha$.  For EIEG GAN, we find that the training of the generator is stable, while the training of the discriminator is unstable. Detailed proofs and experimental results to support these analytical findings are provided in the Appendix.

To provide a clear and concise demonstration of the stability analysis under the proposed framework, we use MMD GAN with a Gaussian RBF kernel \cite{refMMDGAN1} as an example.
 
\paragraph{Example 1 (MMD GAN with Gaussian RBF kernel \cite{refMMDGAN1}).} For MMD GAN with Gaussian RBF kernel, the evolution equation of the perturbation $v$ in generator training dynamics is 
 \begin{equation}
    \frac{\partial v}{\partial t} = C \Delta  \int_{\mathbb{R}^2}(e^{-\frac{\|\bm{x}-\bm{y}\|^2}{2\sigma^2}})vd\Omega_{\bm{y}} =  C \Delta  (e^{-\frac{\|\bm{x}\|^2}{2\sigma^2}} * v)(\bm{x}),    
\label{Eqn: pertur_MMDGAN_Gau}
\end{equation}
where $\Delta$ is Laplacian operator and $C>0.$

Taking Fourier Transform on both sides of Eq. (\ref{Eqn: pertur_MMDGAN_Gau}), we have
\begin{equation}
    \frac{d \hat{v}}{dt} = -C(2\pi)^2|\bm{\xi}|^{2}\hat{v}\mathcal{F}(e^{-\frac{\|\bm{x}\|^2}{2\sigma^2}}) = - C(2\pi)^2\sigma|\bm{\xi}|^{2}\hat{v}e^{-\frac{\sigma^2\bm{|\xi|^2}}{4}}.
\end{equation}

Thus in Fourier spaces, the solution for the perturbation term $\hat{v}$ is
\begin{equation}
    \hat{v} = \hat{v}_0e^{-\big(C(2\pi)^2\sigma|\bm{\xi}|^2e^{-\frac{\sigma^2\bm{|\xi|^2}}{4}}\big)t},
\end{equation}
where $\hat{v}_0$ is the initial value of $\hat{v}$. Thus the perturbations with all $|\bm{\xi}|$ decay, i.e., $|\hat{v}| = |\hat{v}_0|e^{-\big(C(2\pi)^2 \sigma|\bm{\xi}|^2e^{-\frac{\sigma^2\bm{|\xi|^2}}{4}}\big)t} \rightarrow 0$ as $t \rightarrow \infty$, which indicates that the training dynamics for the generator is stable. 

On the other hand, in the discriminator training dynamics, following the framework we proposed, the evolution equation of the perturbation $\hat{v}$ in Fourier spaces is
\begin{equation}
    \frac{d \hat{v}}{dt} = C(2\pi)^2|\bm{\xi}|^{2}\hat{v}\mathcal{F}(e^{-\frac{\|\bm{x}\|^2}{2\sigma^2}}) =  C(2\pi)^2\sigma|\bm{\xi}|^{2}\hat{v}e^{-\frac{\sigma^2\bm{|\xi|^2}}{4}}.
\end{equation}

The solution for the perturbation term $\hat{v}$ is 
\begin{equation}
    \hat{v} = e^{\big((2\pi)^2C\sigma|\bm{\xi}|^2e^{-\frac{\sigma^2\bm{|\xi|^2}}{4}}\big)t}.
\end{equation}
Thus the perturbations with all $|\bm{\xi}|$ decay, i.e.,  $|\hat{v}| = |\hat{v}_0|e^{\big(C(2\pi)^2 \sigma|\bm{\xi}|^2e^{-\frac{\sigma^2\bm{|\xi|^2}}{4}}\big)t} \rightarrow \infty$ 
as $t \rightarrow \infty$, which indicates that the training dynamics for the discriminator is unstable.

\begin{table}[!htbp]
	\centering
	\begin{tabular}{lcccc}
		\toprule
		GAN  & e(x,y) & $\mathcal{F}(e(\|\bm{x}\|))$  &   G & D \\
		\midrule
		Carmer \cite{refcramerGAN}   &    $\|\bm{x}\| + \|\bm{y}\| -\|\bm{x}-\bm{y}\|$    & $\frac{C_n}{|\bm{\xi}|^{n+1}}$ &  \Checkmark  &  \XSolid\\ 
		\midrule
		Gaussian RBF kernel \cite{refMMDGAN1}      &  $\exp \left(-\frac{1}{2\sigma^2}\left\|\bm{x}-\bm{y}\right\|^2\right)$       &  $\sigma e^{-\frac{\bm{|\xi|^2}\sigma^2}{4}}$ &   \Checkmark   & \XSolid  \\
		\midrule
            &    $\alpha = \frac{1}{2}$    & $\frac{1}{\alpha}K_0(\xi)$  &   \Checkmark   &  \XSolid \\
        \cmidrule{2-5}
		Rational quadratic kernel \cite{refMMDGAN2} &        $\alpha = 1$& $\frac{1}{\alpha}e^{|\bm{\xi}|}$ &  \Checkmark   &  \XSolid \\
        \cmidrule{2-5}
        $(1 + \frac{\|\bm{x} - \bm{y}\|^2}{2\alpha})^{-\alpha}$              & $\alpha = 2$        & $\frac{1}{\alpha}(-|\bm{\xi}|+8)e^{|\bm{\xi}|}$  &  \XSolid   & \XSolid  \\
        \cmidrule{2-5}
               &     $\alpha = 3$    & $\dfrac{3}{4\alpha} (|\bm{\xi}|^2 - 3|\bm{\xi}|+3)e^{|\bm{\xi}|}$ &  \Checkmark   &  \XSolid \\
        \midrule 
        EIEG \cite{refEIEGGAN}                              &  $\frac{1}{\|\bm{x}-\bm{y}\|^{n-1}}$      & $\frac{1}{|\bm{\xi}|}$ &  \Checkmark  &  \XSolid \\
		\bottomrule \\
	\end{tabular}
	\caption{ Training stability analysis of particle-based GANs. The last two columns show the training stability of generator and discriminator, i.e., stable: \Checkmark unstable:  \XSolid . $n$ is the dimension of $\bm{x}$ and $C_n \geq 0$ is a constant related to dimension. $K_0(\xi)>0$ is the Bessel function. $\bm{\xi}$ is Fourier mode.}
\label{Table 1}
\end{table}

\FloatBarrier

\section{Stabilizing Method} 
\label{section4}
The above analysis of particle-based distance GANs has revealed that the sign of $\mathcal{F}(e(\|\bm{x}\|))$ is a crucial factor in determining the stability of the training process. To stabilize the unstable training process, we propose an approach that involves introducing a stabilizing term $s(\bm{x},\bm{y}) = s(\|\bm{x}-\bm{y}\|)$ into the particle-based distance. This stabilizing term can be added to the generator or discriminator loss such that $\mathcal{F}(e(\|\bm{x}\|) - \epsilon s(\|\bm{x}\|)) > 0$ for generator and $\mathcal{F}(e(\|\bm{x}\|) - \epsilon s(\|\bm{x}\|)) < 0$ for discriminator, for all Fourier mode $\bm{\xi}$.

Without loss of generality, here we focus on the case where the training of the discriminator is unstable. The same approach can be applied to the case of unstable generator training. 
To address the instability of the discriminator training, we propose adding a stabilizing term to the particle-based distance function in the discriminator loss $\mathcal{L}_D$. Specifically, we define a modified particle-based distance function $\widetilde{e}(\bm{x},\bm{y}) = e(\bm{x},\bm{y}) - \epsilon s(\bm{x},\bm{y})$, where $s(\bm{x},\bm{y})$ is the stabilizing term and $\epsilon > 0$ is a hyperparameter. 
 
The stabilized loss function $\mathcal{L}^{s}_D$ of the discriminator is 
\begin{equation}
\begin{aligned}
     \mathcal{L}^{s}_D = -2 \mathbb{E}_{\bm{x} \sim \mathbb{P}_{\mathrm{data}}, \bm{y} \sim \mathbb{P}_{g}}\widetilde{e}_{D}(\bm{x}, \bm{y})+\mathbb{E}_{\bm{x},\bm{x}^{\prime}\sim \mathbb{P}_{\mathrm{data}}}\widetilde{e}_{D}\left(\bm{x}, \bm{x}^{\prime}\right)+\mathbb{E}_{\bm{y},\bm{y}^{\prime} \sim \mathbb{P}_{g}}\widetilde{e}_{D}\left(\bm{y}, \bm{y}^{\prime}\right),
\end{aligned}
\label{Eqn Loss of sta D}
\end{equation}
where \begin{equation}
    \widetilde{e}_{D}(\bm{x},\bm{y}) = e_{D}(\bm{x},\bm{y}) - \epsilon s_D(\bm{x},\bm{y}) = e(D(\bm{x}),D(\bm{y})) - \epsilon s(D(\bm{x}),D(\bm{y})).
    \end{equation} 
Here the stabilizing term $s(\bm{x},\bm{y}) = s(\|\bm{x}-\bm{y}\|)$ can be controlled by $\epsilon >0$. 
Consequently, the evolution equation for perturbation $\hat{v}$ in Fourier space (Eq. (\ref{Eqn: perturbation general}))
becomes:  
\begin{equation}
\frac{d \hat{v}}{dt} = C(2\pi)^2|\bm{\xi}|^{2}\hat{v}\mathcal{F}(e(\|\bm{x}\|)- \epsilon s(\|\bm{x}\|)).
\label{Eqn stable Fourier evo v}
\end{equation}
By selecting an appropriate form of the stabilizing term $s(\bm{x},\bm{y})$ and parameter $\epsilon$, we can ensure that $\mathcal{F}(e(|\bm{x}|) -\epsilon s(|\bm{x}|))(\bm{\xi}) < 0$ for any $\bm{\xi}$. This condition guarantees that $|\hat{v}| \rightarrow 0$ as $t \rightarrow \infty$, indicating that the training process of the discriminator becomes stable.

\begin{figure}[!hbtp]
	\begin{subfigure}[t]{0.24\textwidth}
		\centering
		\includegraphics[width= 1.1\textwidth]{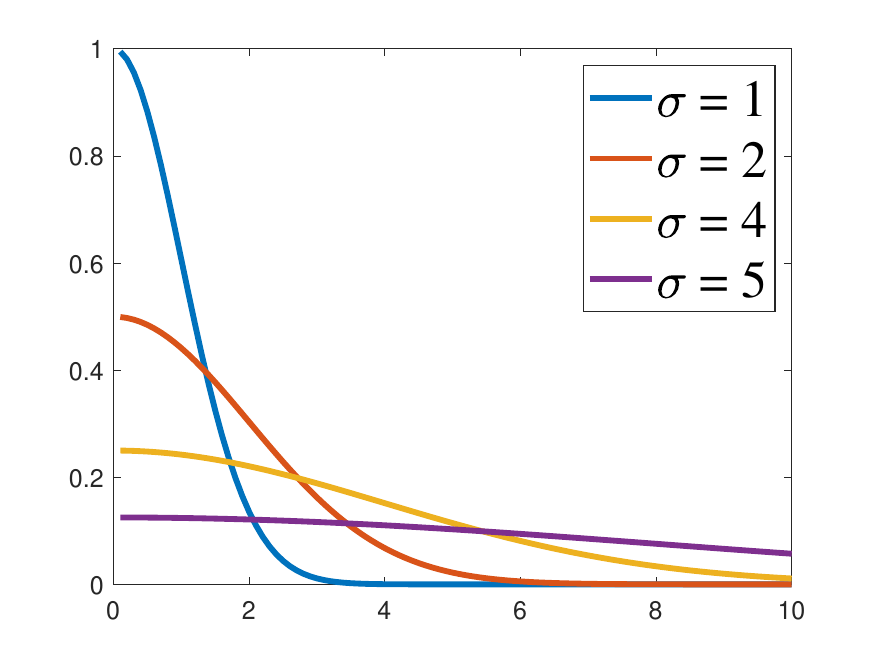}
		\caption{$\frac{1}{\sigma} e^{(-\frac{r^2}{2\sigma^2})}$.}
		\label{Fig.sub.1}
	\end{subfigure}
	\begin{subfigure}[t]{0.24\textwidth}
		\centering
		\includegraphics[width= 1.1\textwidth]{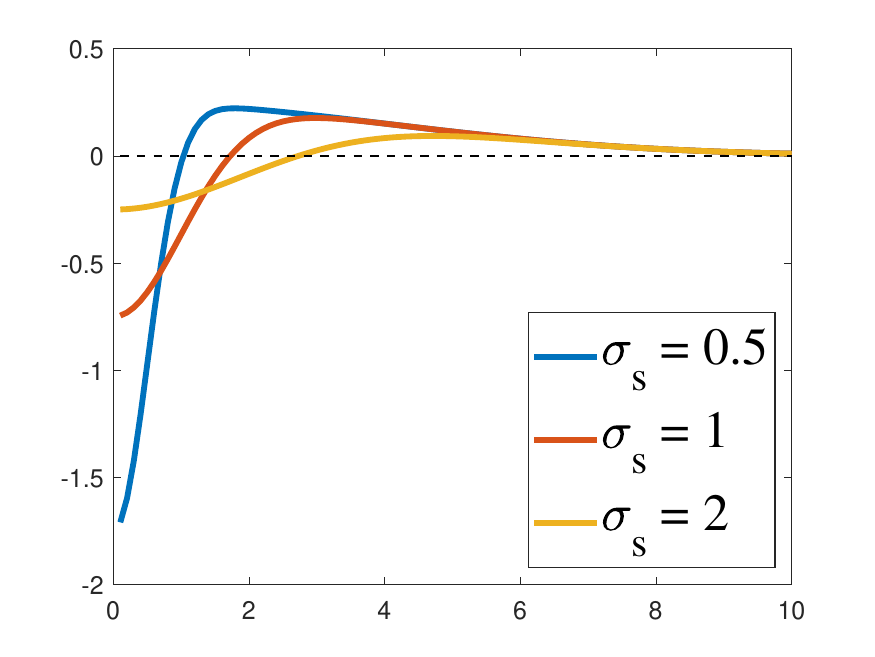}
		\caption{$e_{\sigma_0} -e^{s}_{\sigma_s}$.}
		\label{Fig.sub.2}
	\end{subfigure}
	\begin{subfigure}[t]{0.24\textwidth}
		\centering
		\includegraphics[width= 1.1\textwidth]{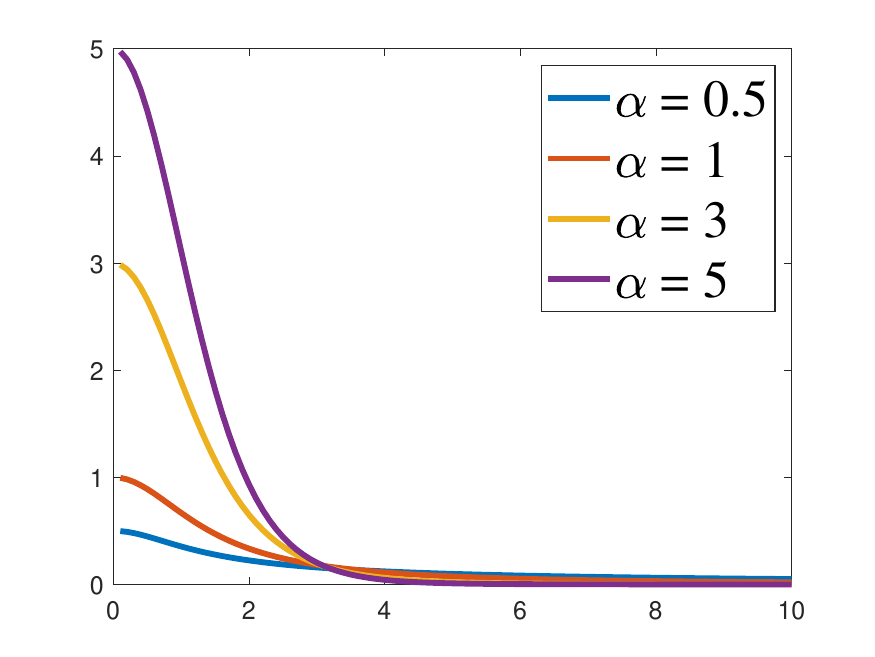}
		\caption{$\alpha (1 + \frac{r^2}{2\alpha})^{-\alpha}$.}
		\label{Fig.sub.3}
	\end{subfigure}	
	\begin{subfigure}[t]{0.24\textwidth}
		\centering
		\includegraphics[width= 1.1\textwidth]{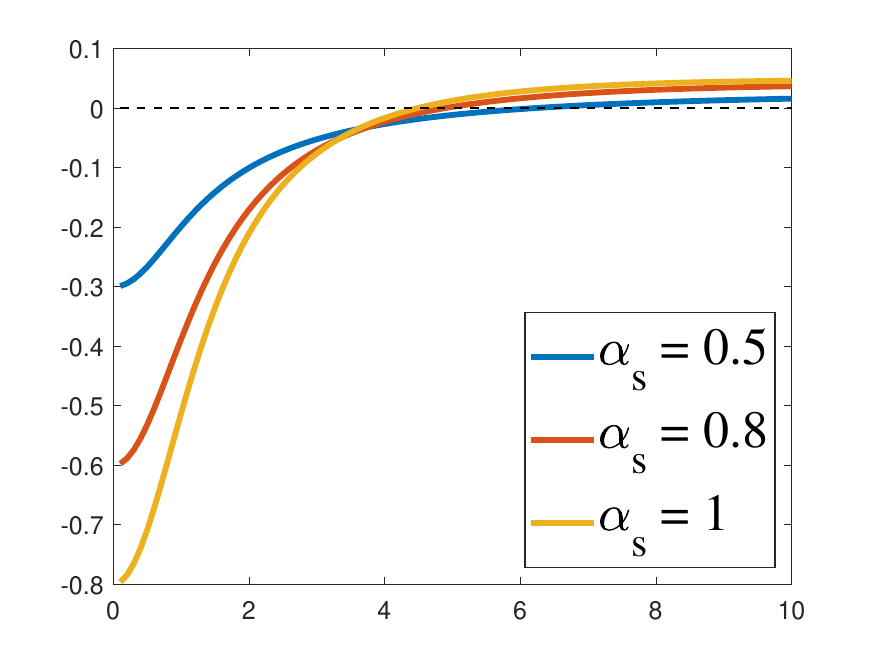}
		\caption{$e_{\alpha_{0}} -e^{s}_{\alpha_s}$.}
		\label{Fig.sub.4}
	\end{subfigure}	
\caption{
(a) Rescale Gaussian RBF kernel $\{e_{\sigma_i}(r)\}$, where $\sigma_i \in \{1,2,4,5\}$; (b) shows (a) with stabilizing term $\widetilde e_\sigma(r) = e_{\sigma_0}(r) - e_{\sigma_{s_i}}(r)$ where $\sigma_0=4$ and $\sigma_{s_i} \in \{0.5,1,2\}$; (c) rescale rational quadratic kernel $\{e_{\alpha_i}(r)\}$, where $\alpha_i \in \{0.5,1,3,5\}$; (d) shows (c) with stabilizing term $\widetilde e_\alpha(r) = e_{\alpha_{0}}(r) - e_{\alpha_{s_i}}(r)$ where $\alpha_0 = 0.2$ and $\alpha_{s_i} \in \{0.5,0.8,1\}$.
The dashed line represents zero values for comparison.}
\label{Fig.particle-based distance}
\end{figure}

\FloatBarrier 
\paragraph{Choice of $s(\bm{x},\bm{y})$.} First, we propose a rescaling distance $e_{k}(\bm{x},\bm{y})$ parameterized with a scaler $k$:
\begin{itemize}
    \item Rescale Gaussian RBF kernel (Fig. \ref{Fig.sub.1}):
\begin{equation}
e_{\sigma}(\bm{x},\bm{y}) =\frac{1}{\sigma} \exp \left(-\frac{1}{2\sigma^2}\left\|\bm{x}-\bm{y}\right\|^2\right).
\label{Eqn: rescale MMD Gua Kernel}
\end{equation}
    \item Rescale rational quadratic kernel (Fig. \ref{Fig.sub.3}):
\begin{equation}
e_{\alpha}(\bm{x},\bm{y}) = \alpha (1 + \frac{\|\bm{x} - \bm{y}\|^2}{2\alpha})^{-\alpha}.
\label{Eqn: rescale MMD Ratio kernel}
\end{equation}
    \item Elastic interaction term:
    \begin{equation}
e_{m}(\bm{x},\bm{y}) = \frac{1}{\|\bm{x}-\bm{y}\|^{m}}.
\end{equation}
\end{itemize}

In EIEG GAN \cite{refEIEGGAN}, $s(\bm{x},\bm{y}) = e_{m}(\bm{x},\bm{y})$ is a higher order term.
Specifically, the stabilized distance is $ \widetilde e_m(\bm{x},\bm{y}) = \frac{1}{r^{n-1}} - \epsilon \frac{1}{r^m}$ where the stabilizing term is $e_m(\bm{x},\bm{y}) = \frac{1}{r^m}$ with $m>n-1$. 
Such form is consistent with the he Lennard-Jones potential \cite{refLJpotential} $V_{LJ} = 4\epsilon[(\frac{\sigma}{r})^{12} - (\frac{\sigma}{r})^6] $
in molecular dynamics. 
Here we can also propose a similar stabilizing term for MMD GANs.
The stabilized distance for MMD GAN with Gaussian RBF kernel \cite{refMMDGAN1} is (Fig. \ref{Fig.sub.2}),
\begin{equation}
\widetilde e_\sigma(\bm{x},\bm{y}) = e_{\sigma_1}(\bm{x},\bm{y}) - \epsilon e_{\sigma_2}(\bm{x},\bm{y}).
\end{equation}
where $\sigma_2 < \sigma_1 $, and $\varepsilon > e^{\frac{|\bm{\xi}|(\sigma_2^2 - \sigma_1^2 )}{4}}$ such that $\mathcal{F}(\frac{1}{\sigma_1}e^{-\frac{\bm{x}^2}{2\sigma_1^2}} - \varepsilon \frac{1}{\sigma_2}e^{-\frac{\bm{x}^2}{2\sigma_2^2}}) < 0$ for all Fourier mode $\bm{\xi}$, 
and thus the training of discriminator becomes stable in our framework.
The stabilized distance for MMD GAN with rational quadratic kernel \cite{refMMDGAN2} is (Fig. \ref{Fig.sub.4}),
\begin{equation}
\widetilde e_\alpha(\bm{x},\bm{y}) = e_{\alpha_1}(\bm{x},\bm{y}) - \epsilon e_{\alpha_2}(\bm{x},\bm{y}),
\end{equation}
where $\alpha_2 > \alpha_1 $. 
The setting of $\alpha_1, \alpha_2$ is more complex for the rational kernel, and we will provide more discussions in the Appendix.

\textbf{Parameter $\epsilon$.} The selection of the parameter $\epsilon$ is critical for the success of the stabilizing approach. On the one hand, $\epsilon$ should be large enough to stabilize the training by ensuring that $\mathcal{F}(e(\|\bm{x} \|)- \epsilon s(\|\bm{x} \|))<0$. For example, in MMD GAN with a Gaussian RBF kernel, we found that $\epsilon \geq 1$ is sufficient to stabilize the training. On the other hand, $\epsilon$ cannot be too large, as this would cause the data points from the same distribution to be too scattered in the feature space, and would also reduce the adversarial nature of the discriminator.

To understand the effect of the stabilizing term on the training process, we use a molecular dynamics analogy to interpret the optimization of the discriminator loss with the stabilizing term, i.e., $\max_{D} \mathcal{L}^{s}$. In this analogy, we consider the force between two samples in the feature space, where $\widetilde{e}(\bm{x},\bm{y})$ represents the potential energy between them. If $\widetilde{e}(\bm{x},\bm{y}) < 0$, this indicates that the force between the two particles is repulsive, while if $\widetilde{e}(\bm{x},\bm{y}) > 0$, the force is attractive. As shown in Fig. \ref{Fig.sub.2} and Fig. \ref{Fig.sub.4}, when two samples are close to each other, the force between them is repulsive, while when they are far apart, the force between them is attractive. Therefore, if the stabilizing term $\epsilon$ is set too large, it will cause too much repulsion between samples from the same distribution, resulting in the samples being spread too thinly in the feature space. On the other hand, if $\epsilon$ is too small, it may lead to training instability and mode collapse, as the samples in the feature space collapse. More discussion can be found in Appendix.

\section{Related Works} \label{section5}

\paragraph{MMD GAN related work.}
In the original MMD GAN \cite{refMMDGAN1}, the discriminator is viewed as a kernel selection mechanism.
Here, we propose an alternative perspective that the discriminator can be regarded as a feature transformation mapping. 
This view provides insights into various approaches to improve MMD GAN performance by preserving more information about the data and samples in the feature space. 
For example, in \cite{refMMDGANimprove}, the proposed repulsive discriminator loss can be understood from our perspective as preventing sample collapse in feature space. In \cite{refCRGAN}, the addition of consistency regularization to the discriminator loss can be understood as grouping similar samples closely in the feature space.
Furthermore, we utilize this perspective to analyze the training stability of MMD GAN via Wasserstein gradient flow. Our results suggest that MMD GAN training is unstable. 
This finding is consistent with some experimental results in \cite{refMMDGAN2}. 
Our approach is simpler and more accessible than previous theoretical works, such as \cite{refMMDflow}, which analyze the convergence of MMD GAN through gradient flow. Additionally, to the best of our knowledge, our work is the first to perform training stability analysis on MMD GANs.

\paragraph{Stabilization methods for GANs.} 
Training stability is a critical issue in GANs, and various methods have been proposed to address this challenge \cite{refPreWGAN,refWGAN,refWGANGP,refstaGAN1,refstaGAN2,refCRGAN}. One common approach involves imposing Lipschitz conditional restrictions on the discriminator through normalization and regularization techniques. Normalization methods such as spectral normalization \cite{refSNGAN} and gradient normalization \cite{refGNGAN} have been effective in stabilizing training. Regularization methods, such as adding a gradient penalty to the discriminator loss \cite{refWGANGP,refGANregu1,refGANregu2,refGANregu3,refMMDGANGP}, have also been widely adopted. Using our analysis framework, we analyze the impact of adding a gradient penalty on training stability and find that it does indeed stabilize training (see Appendix). When a gradient penalty is added to the discriminator's loss as a stabilizing term, it appears in the gradient flow as an additional Laplacian term. However, this can cause the discriminator to become overly smooth, and the generated samples may become connected, leading to mode collapse, while the proposed stabilizing term has no such problem. Although spectral normalization is an effective method for stabilizing GAN training, it has been reported that it may lead to mode collapse in SNGAN \cite{refSNGANmodecollaspe}. Our stabilizing term, which creates a repulsive force, can prevent sample points from collapsing together, thereby addressing the mode collapse issue. More discussion can be found in the Appendix.

\FloatBarrier

\begin{figure}[!hbtp]
        \centering 
	\begin{subfigure}[t]{0.24\textwidth}
		\centering
		\includegraphics[width= 1\textwidth]{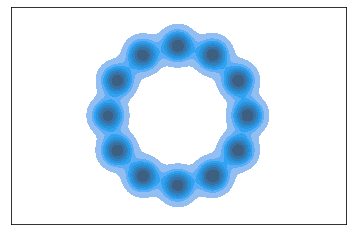}
		\caption{True Density.}
		\label{Fig.sub.2a}
	\end{subfigure}
	\begin{subfigure}[t]{0.24\textwidth}
		\centering
		\includegraphics[width= 1\textwidth]{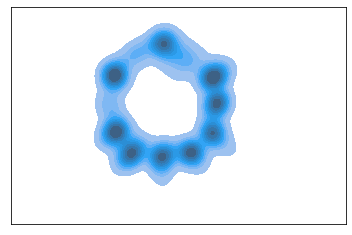}
		\caption{MMD GAN}
		\label{Fig.sub.2b}
	\end{subfigure}
	\begin{subfigure}[t]{0.24\textwidth}
		\centering
		\includegraphics[width= 1\textwidth]{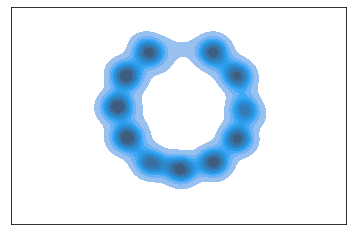}
		\caption{MMD GAN-GP.}
		\label{Fig.sub.2c}
	\end{subfigure}	
	\begin{subfigure}[t]{0.24\textwidth}
		\centering
		\includegraphics[width= 1\textwidth]{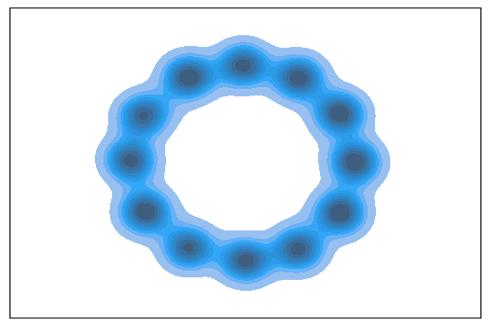}
		\caption{Stabilized  MMD GAN.}
		\label{Fig.sub.2d}
	\end{subfigure}	
 \begin{subfigure}[t]{0.24\textwidth}
		\centering
		\includegraphics[width= 1\textwidth]{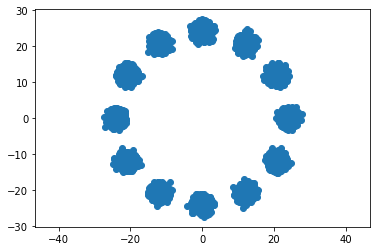}
		\caption{True Samples.}
		\label{Fig.sub.2e}
	\end{subfigure}	
  \begin{subfigure}[t]{0.24\textwidth}
		\centering
		\includegraphics[width= 1\textwidth]{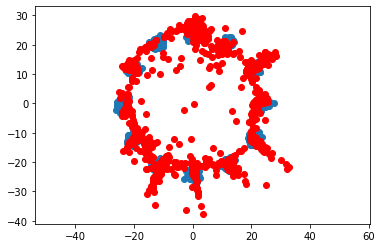}
		\caption{MMD GAN.}
		\label{Fig.sub.2f}
	\end{subfigure}	
   \begin{subfigure}[t]{0.24\textwidth}
		\centering
		\includegraphics[width= 1\textwidth]{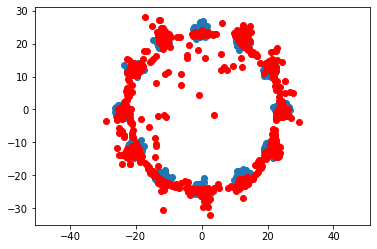}
		\caption{MMD GAN-GP.}
		\label{Fig.sub.2g}
	\end{subfigure}	
   \begin{subfigure}[t]{0.24\textwidth}
		\centering
		\includegraphics[width= 1\textwidth]{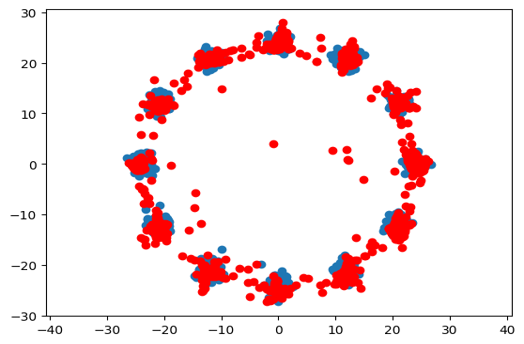}
		\caption{Stabilized MMD GAN.}
		\label{Fig.sub.2h}
	\end{subfigure}	
\caption{MMD GAN variants with Gasussian RBF kernel for Gaussian mixture sampling. First row: KDE plots for generated samples. Second row: blue points represent samples from the dataset, and red points represent the generated samples. }
\label{Fig.MoG}
\end{figure}

\section{Experiments} 
\label{section6}
To validate both the proposed analysis and stabilizing method, we take an example of MMD GAN with a Gaussian RBF kernel $k_{\sigma}^{rbf}\left(\bm{x}, \bm{y}\right)$,
and conduct experiments on synthetic and real datasets (CIFAR-10 \cite{refCIFAR-10}). More experiments and detailed settings are provided in Appendix.

\paragraph{Gaussian Mixture.}
We conduct Gaussian mixture experiments to compare our method with the original MMD GAN \cite{refMMDGAN1} and MMD GAN-GP \cite{refMMDGANGP}. 
We sample from a mixture of eight two-dimensional Gaussian distributions, and all models are trained with 2000 particles. 
The results are shown in Fig. \ref{Fig.MoG}: (1) The generated samples from MMD GAN are disorganized (Fig. \ref{Fig.sub.2f}), which is caused by the instability of the training process. (2) Mode collapse occurs in MMD GAN-GP where the generator fails to grasp all the modes of the distribution, as shown in Fig. \ref{Fig.sub.2c}. Also in Fig. \ref{Fig.sub.2g}, the generated samples all link together. This is because the gradient penalty added to the discriminator loss as a stabilizer makes the generated sample points more scattered in the feature space. (3) As shown in Fig. \ref{Fig.sub.2h}, our proposed method successfully grasps all the modes of the Mixture Gassuian.

\begin{wrapfigure}{r}{0.4\textwidth}
  \begin{center}
  \includegraphics[width=0.45 \textwidth]{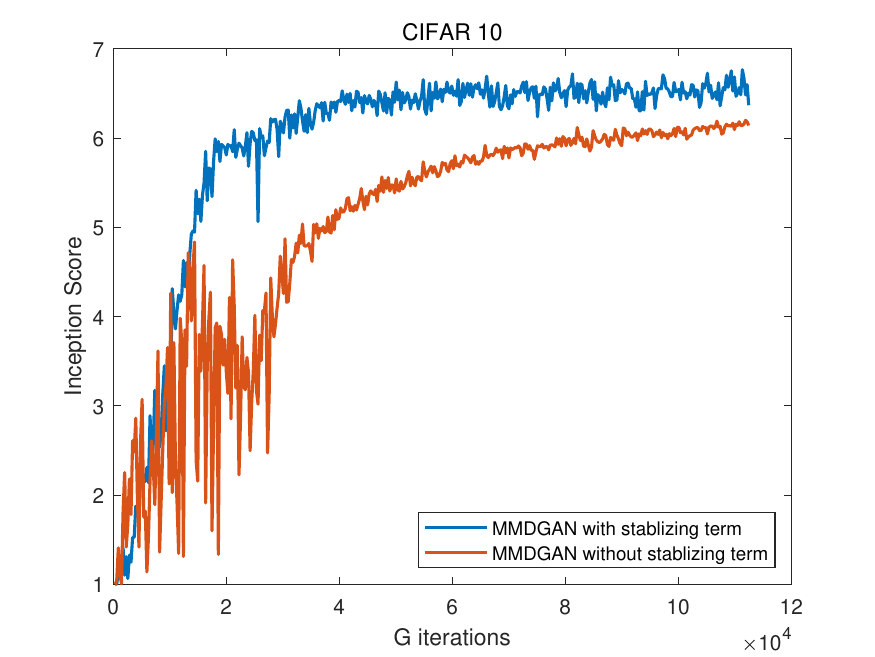}
  \end{center}
  \caption{Training Curves on CIFAR10 of MMD GAN  with or 
  without stabilizing term.}
  \label{Fig:MMD GAN stability}
\end{wrapfigure}

\paragraph{Image Generation.}
To verify the results of our stability analysis presented in Table \ref{Table 1}, which showed that the training dynamics of MMD GAN with a Gaussian RBF kernel is unstable, and to demonstrate the effectiveness of our approach, we conduct image generation experiments on the CIFAR-10 dataset. 
We use the same network architecture and hyperparameters as in the original MMD GAN paper \cite{refMMDGAN1}.
We then use a linear combination of particle-based distances with different scales, i.e., $e_{rbf}(\bm{x},\bm{y}) = \sum_{i = 1}^{K}e_{\sigma_i}(\bm{x},\bm{y})$ as in \cite{refMMDGAN1},
where $K = 4$ and $\sigma_{i} = \{2,4,8,16\}$. 
For the stabilizing term, we set $s(\bm{x},\bm{y}) = \sum_{i = 1}^{K}e_{\sigma_i}(\bm{x},\bm{y})$, where $\sigma_{i} = \{1,\sqrt{2},2,2\sqrt{2}\}$. 
To provide a more intuitive representation of the stability of GAN training for image generation, we use the Inception score \cite{refInception} to plot the training curve. 
The results are shown in Fig. \ref{Fig:MMD GAN stability}. 
As indicated in the figure, the original MMD GAN suffers from training instability, while our stabilizing term significantly improves the stability of training and enhances the quality of the generated images.

\section{Conclusion and Discussion}
This study introduces a novel framework for analyzing the training stability of particle-based distance GANs using the Wasserstein gradient flow. 
We use the proposed perturbation evolution dynamics to analyze the training stability and our analysis reveals that the training of these GANs is unstable. 
Moreover, we develop a new stabilizing method by introducing a stabilizing term in the loss function of the unstable network. The empirical results 
validate our analysis and demonstrate the effectiveness of the proposed stabilizing method.

Our analysis in this paper focuses on particle-based distance GANs. 
A property of those GANs is that the probability density function of the samples in feature space $p_{f_{\mathrm{data}}}$, is smooth, enabling us to use the Wasserstein gradient flow to analyze their evolution stability. 
For Vanilla GAN \cite{refGAN}, the probability density function in feature space of the generated samples is discrete. 
Therefore, the Wasserstein gradient flow framework,
which describes the evolution of a smooth probability density function, cannot apply to Vanilla GAN. 
Alternatively, we can derive the perturbation evolution dynamics based on the functional gradient flow to analyze the training stability of the Vanilla GAN's generator and discriminator training, i.e., $\frac{\partial G}{\partial t} = -\frac{\delta \mathcal{L}{G}}{\delta G}$ and $\frac{\partial D}{\partial t} = \frac{\delta \mathcal{L}{D}}{\delta D}$. 
We provide this analysis in the Appendix.
Additionally, our framework can be extended to the case where we take account of the neural network architectures and analyze the perturbation evolution equation through the gradient flow of the network parameters. By finding the perturbation evolution equation for the corresponding gradient flow, our framework can be extended to various training stability analysis.

\section*{Acknowledgements}

The work of Y.X. was supported by the Project of Hetao Shenzhen-HKUST Innovation Cooperation Zone HZQB-KCZYB-2020083.

{\small
\bibliographystyle{plain}
\bibliography{main}
}


\newpage

\begin{appendices}

{\Large\textbf{Appendix}}

\section{Physical Interpretation of Proposed Framework} \label{Appendix A}

Section \ref{section3.2} describes our proposed stability analysis framework, which is based on the perspective that the discriminator can be viewed as a feature transformation mapping. Figure \ref{Fig: framework} provides an intuitive illustration of this perspective.

\begin{figure}[!hbtp]
		\centering
		\includegraphics[width= 0.95\textwidth]{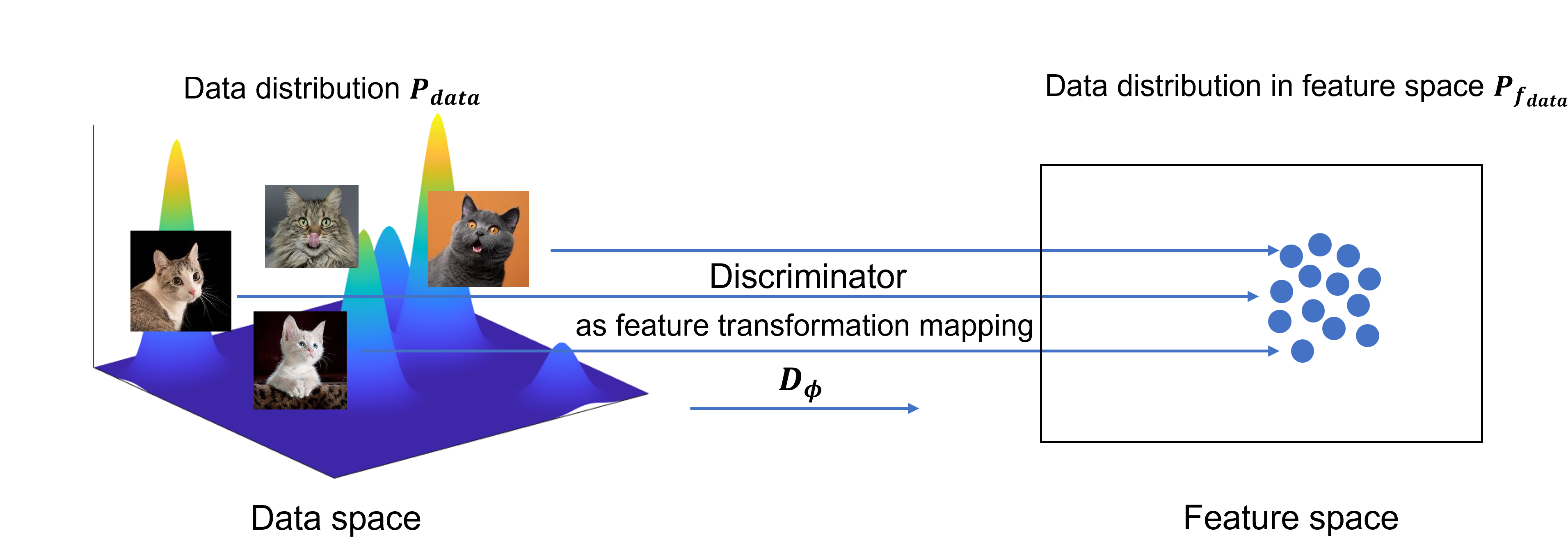}
\caption{Ilustration of our proposed framework which presents an alternative perspective for viewing the discriminator in particle-based GANs as a feature transformation mapping. The blue points stand for the corresponding data samples in feature space. The data samples (cat images) in the figure are from the Internet.}
\label{Fig: framework}

\end{figure}	
In our framework, we analyze the training stability of particle-based GANs through the evolution equation of generated samples in feature space. We offer a physical interpretation of the training process of the generator and discriminator. Fig.~\ref{Fig.md.sub.1} illustrates that in the training process of the generator $\min_G \mathcal{L}_G$ (Eq.~\eqref{Eqn. loss of GEN}), the generated samples experience a repulsive force between each other while the force between generated and real data samples is attractive. Fig.~\ref{Fig.md.sub.2} shows that in the discriminator training $\max_D \mathcal{L}_D$ (Eq.~\eqref{Eqn. Loss of Dis}), the force between the generated samples is attractive while the force between the generated and real data samples is repulsive. 
Fig.~\ref{Fig.md.sub.3} demonstrates that during the training of a stabilizing discriminator $\max_D \mathcal{L}_D^s$ (Eq.~\eqref{Eqn Loss of sta D}), the force between the generated and real data samples is repulsive, and if two generated samples are close to each other, the force between them is also repulsive, otherwise it is attractive. The stability effect for adding the stabilizing term can be found in the main paper.

\begin{figure}[!hbtp]
	\begin{subfigure}[t]{0.33\textwidth}
		\centering
		\includegraphics[width= 1\textwidth]{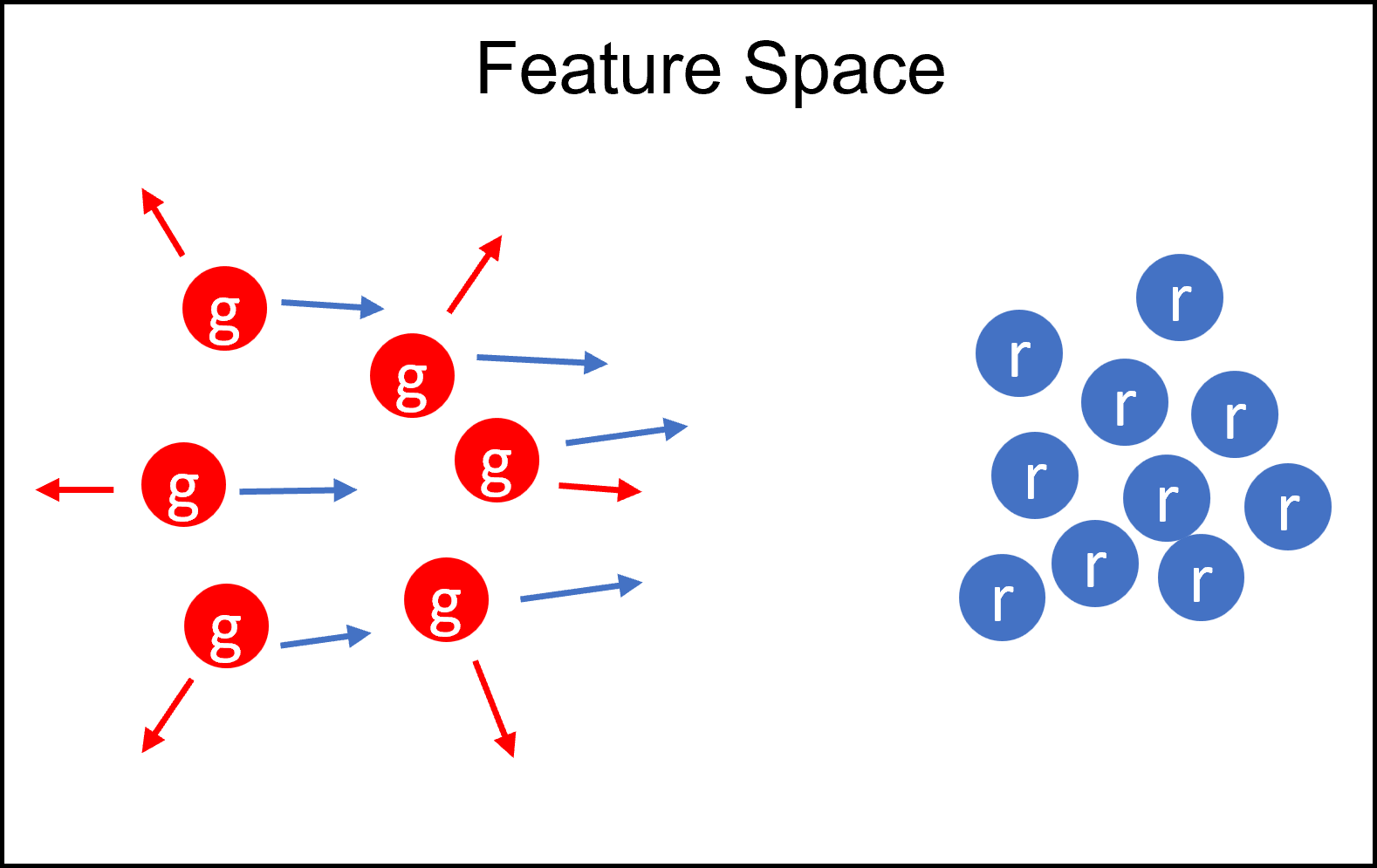}
		\caption{$\min_G \mathcal{L}_G$  (Eq.(\ref{Eqn: Gen particle flow})).}
		\label{Fig.md.sub.1}
	\end{subfigure}
	\begin{subfigure}[t]{0.33\textwidth}
		\centering
		\includegraphics[width= 1\textwidth]{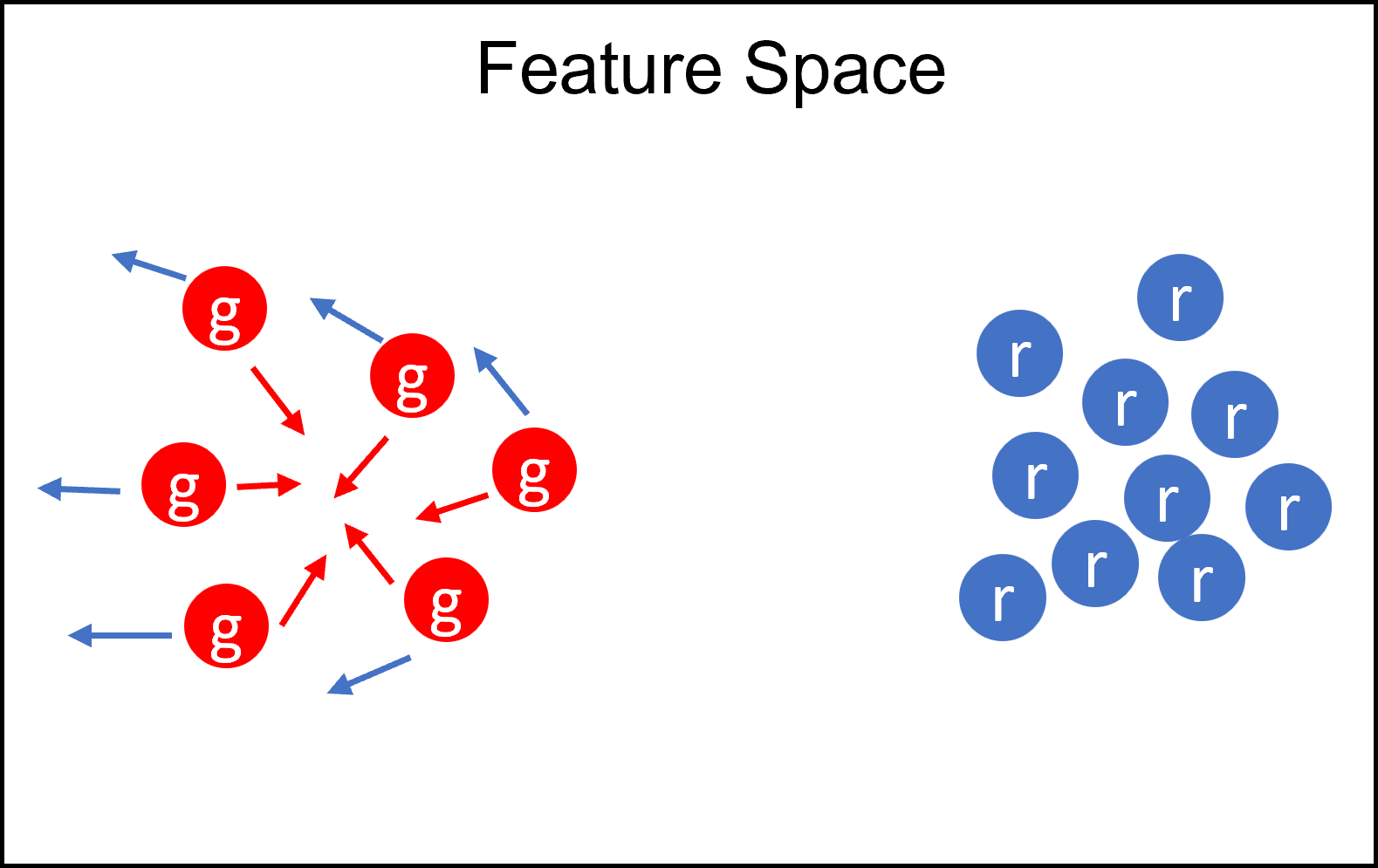}
		\caption{$\max_D \mathcal{L}_D$  (Eq.(\ref{Eqn: Discri particle flow})).}
		\label{Fig.md.sub.2}
	\end{subfigure}
	\begin{subfigure}[t]{0.33\textwidth}
		\centering
		\includegraphics[width= 1\textwidth]{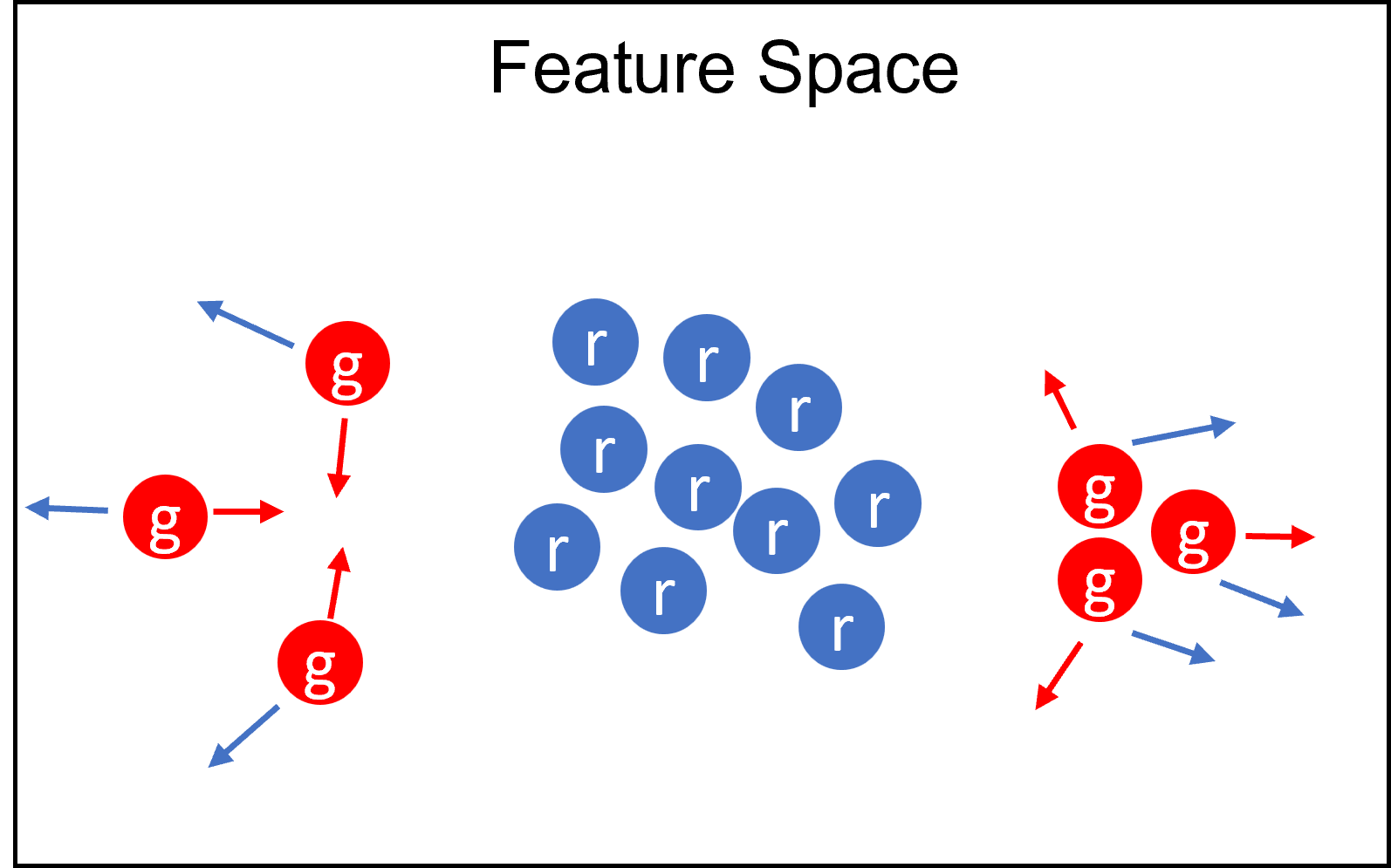}
		\caption{$\max_D \mathcal{L}^{s}_D$ \ref{Eqn Loss of sta D} .}
		\label{Fig.md.sub.3}
	\end{subfigure}	
        
\caption{Ilustration of the training process of the particle-based GANs. Red particles with the letter g stand for the generated samples in feature space and blue particles with the letter r stand for the real data samples in feature space. The red arrows depict the force exerted by the generated samples, while the blue arrows represent the force exerted by the real data samples.}
\label{Fig. Molecular Dynamics}
\end{figure}

\newpage

\section{Proofs}
Throughout this section, we analyze the stability in the simplest case where the perturbation function is added to a constant-valued density with respect to space where the constant value may change with time.

\subsection{Stability analysis for particle-based distance GANs}

In this section, we demonstrate how to derive the perturbation evolution equation (Eq.~\eqref{Eqn: perturbation general}) of the Wasserstein gradient flow of particle-based distance GANs. We give detailed proofs for the analytical results presented in Table \ref{Table 1} in section \ref{section3.3}.

\subsubsection{Derivation of perturbation evolution equation}

\begin{proposition}
The perturbation evolution equation of the Wasserstein gradient flow of particle-based distance GANs in Fourier space is 
$$
        \frac{d \hat{v}}{dt} = \mp C|\bm{\xi}|^{2}\hat{v}\mathcal{F}(e(\|\bm{x}\|)).
$$
The constant $C \geq 0$ is associated with $p_{f_{\mathrm{data}}}$. In the context of GANs, the negative sign indicates the training dynamics of the generator, while the positive sign indicates the training dynamics of the discriminator. The function $e(\bm{x},\bm{y}) = e(\|\bm{x}-\bm{y}\|)$ and $\mathcal{F}(e(\|\bm{x}\|))$ denotes the Fourier transform of $e(\|\bm{x}\|)$
\footnote{Here we define the Fourier transform of $f(\bm{x})$ as $\mathcal{F}(f(\bm{x}))(\bm{\xi})=\int_{\mathbb{R}^n}f(\bm{x}) e^{-i 2 \pi(\bm{\xi}\cdot \bm{x})} d\bm{x}$}.
We use $\hat{v}$ to denote the Fourier transform of $v$, and $\bm{\xi}$ to represent the Fourier mode. 
\end{proposition}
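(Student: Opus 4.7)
The plan is to derive the linearized Wasserstein gradient flow around a locally constant background density and then Fourier transform it. First I would compute the first variation $\delta E/\delta p_{f_g}$ of the particle-based distance $E[p_{f_g},p_{f_{\mathrm{data}}}]$ in Eq.~\eqref{Eqn: Particle_metric}; only the cross term and the $p_{f_g}\otimes p_{f_g}$ self-energy actually depend on $p_{f_g}$, yielding
\begin{equation*}
\frac{\delta E}{\delta p_{f_g}}(\bm{x}) = 2\int e(\bm{x},\bm{y})\bigl(p_{f_g}(\bm{y})-p_{f_{\mathrm{data}}}(\bm{y})\bigr)\,d\bm{y},
\end{equation*}
consistently with the expressions already used in Eqs.~\eqref{Eqn: Gen Wg flow} and \eqref{Eqn: Discri Wg flow}.

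Next I would substitute this into $\partial_t p_{f_g} = \pm\nabla\cdot(p_{f_g}\nabla(\delta E/\delta p_{f_g}))$, where the $-$ and $+$ signs correspond to the generator and discriminator branches respectively, and linearize as in Section~\ref{section3.2} by writing $p_{f_g}(\bm{x},t) = p_0 + v(\bm{x},t)$ with $p_0$ the locally constant background value (taken equal to $p_{f_{\mathrm{data}}}$ in a neighborhood of the fixed point $\bm{x}_0$) and $v\ll 1$. Keeping only first-order terms in $v$, the outer factor $p_{f_g}$ becomes $p_0$ and the background contribution cancels because $p_0-p_{f_{\mathrm{data}}}=0$ locally, leaving $\delta E/\delta p_{f_g} \approx 2\int e(\bm{x},\bm{y})v(\bm{y})\,d\bm{y} = 2(e*v)(\bm{x})$ by the translation invariance $e(\bm{x},\bm{y}) = e(\|\bm{x}-\bm{y}\|)$. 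The linearized equation is therefore
\begin{equation*}
\frac{\partial v}{\partial t} = \pm\, 2p_0\,\Delta(e*v).
\end{equation*}

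Finally I would take the Fourier transform on both sides, using $\mathcal{F}(\Delta f)(\bm{\xi}) = -(2\pi)^2|\bm{\xi}|^2\hat f(\bm{\xi})$ and the convolution theorem $\mathcal{F}(e*v) = \mathcal{F}(e)\hat v$, to obtain
\begin{equation*}
\frac{d\hat v}{dt} = \mp\, C|\bm{\xi}|^2\hat v\,\mathcal{F}(e(\|\bm{x}\|)),
\end{equation*}
with $C$ a nonnegative constant proportional to $p_0$ (absorbing the factor of $(2\pi)^2$ as already done implicitly in Eq.~\eqref{Eqn: perturbation general}), and the sign inherited directly from the generator/discriminator $\pm$ in the Wasserstein gradient flow.

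The main obstacle I anticipate is justifying the linearization cleanly. Treating $p_0$ as spatially constant inside $p_{f_g}\nabla(\cdot)$ and dropping the background cross term are exactly the ``simplest case'' hypotheses stated at the beginning of the appendix, and translation invariance of $e$ is the structural property that converts the double integral in $\delta E/\delta p_{f_g}$ into a convolution and hence into a multiplier in Fourier space. I would flag both assumptions explicitly so the constant $C$ is unambiguously identified as nonnegative and the $\mathcal{F}(e(\|\bm{x}\|))$ factor emerges as the only source of sign-dependent behaviour, which is what drives the stability dichotomy summarized in Table~\ref{Table 1}.
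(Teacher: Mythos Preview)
Your approach is essentially the paper's own argument: compute $\delta E/\delta p_{f_g}$, linearize the Wasserstein gradient flow around a spatially constant background (with $p_0=p_{f_{\mathrm{data}}}$ locally, exactly the hypothesis the paper states in its Remark), recognize the resulting integral as a convolution via $e(\bm{x},\bm{y})=e(\|\bm{x}-\bm{y}\|)$, and Fourier transform. Your linearized equation $\partial_t v=\pm 2p_0\,\Delta(e*v)$ and the identification $C=2(2\pi)^2 p_0$ match the paper.

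There is, however, a sign slip in your assignment of generator versus discriminator to the Wasserstein gradient flow. By Definition~2 (Eq.~\eqref{Eqn: Wflow_pre}), the flow that \emph{minimizes} $E$ is $\partial_t p = +\nabla\cdot(p\,\nabla\frac{\delta E}{\delta p})$; hence the generator (which minimizes) takes the $+$ sign, as in Eq.~\eqref{Eqn: Gen Wg flow}, and the discriminator (which maximizes) takes the $-$ sign, as in Eq.~\eqref{Eqn: Discri Wg flow}. You state the opposite. If one follows your convention through, the Laplacian contributes the extra minus in Fourier space and your final $\mp$ would assign $-C|\bm{\xi}|^2\mathcal{F}(e)$ to the discriminator (stable when $\mathcal{F}(e)>0$) and $+C|\bm{\xi}|^2\mathcal{F}(e)$ to the generator (unstable), which is the reverse of the proposition and of Table~\ref{Table 1}. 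Swapping the initial $\pm$ assignment fixes everything; the rest of the derivation is correct and coincides with the paper's.
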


\begin{proof}
Consider generated samples $G(\bm{z})$ in the data space, with a fixed discriminator $D$, and denote the generated samples in feature space as $X_t := D(G(\bm{z})) $ .
The evolution of $X_t$ can be described by
\begin{equation}
dX_t = \left[ - 2\mathbb{E}_{\bm{y} \sim \mathbb{P}_{f_{\mathrm{data}}}} (\nabla e(X_t,\bm{y})) + 2\mathbb{E}_{\bm{y} \sim \mathbb{P}_{f_{g}}} (\nabla e(X_t,\bm{y})) \right]dt = -\nabla (\frac{\delta E}{\delta p_{f_{g}}}),   \quad X_0 \sim \mathbb{P}_{f_{\mathcal{N}(I, \bm{0})}}.
\end{equation}
The corresponding density flow is 
\begin{equation}
\begin{aligned}
      \frac{\partial p_{f_{g}} }{\partial t}  
      & =
      -\nabla \cdot\left[ p_{f_g} \left(-\nabla \frac{\delta E}{\delta p_{f_g}} \right) \right] 
      =  
      \nabla \cdot\left(p_{f_g} \nabla \frac{\delta E}{\delta p_{f_g}}\right)  \\
      &=
      \nabla \cdot\left[
      p_{f_g} \nabla \bigg(2\mathbb{E}_{\bm{y} \sim \mathbb{P}_{f_{g}}}e(\bm{x}, \bm{y})- 2\mathbb{E}_{\bm{y} \sim \mathbb{P}_{f_{\mathrm{data}}}}e(\bm{x}, \bm{y})\bigg)\right]\\
      &=
      2\nabla \cdot\left[ p_{f_g} \nabla\bigg( \int_{\mathbb{R}^d}(e(\bm{x}, \bm{y})p_{f_g}(\bm{y})- e(\bm{x}, \bm{y})p_{f_{\mathrm{data}}}(\bm{y}))d\bm{y}\bigg)\right],
\end{aligned}
\label{Eqn: Wgflow}
\end{equation}
where $p_{f_{g}}$ is the constant-valued density of the generated samples $X_t$ in the feature space.
Assume we add a small perturbation $v, |v| \ll 1$ 
to the density $ p_{f_{g}}$
and we denote $C_0:=p_{f_{g}} \geq 0$, which may various with time.
Substituting the perturbed density $C_0 + v$ into the Wasserstein gradient flow (Eqn.~\eqref{Eqn: Wgflow}) and only keeping the linear term of $v$ since $|v| \ll 1$, we can obtain
\begin{equation}
    \frac{\partial v}{\partial t} = C_0 \Delta  \int_{\mathbb{R}^d}e(\bm{x},\bm{y})vd\bm{y}= C_0 \Delta  \int_{\mathbb{R}^d}e(\|\bm{x}-\bm{y}\|)vd\bm{y} =  C_0 \Delta  (e(\|\bm{x}\|) * v)(\bm{x}).
\end{equation}

Taking Fourier transform on both sides of the equation
\begin{equation}
    \frac{d \hat{v}}{dt} = - 2C_0 (2\pi)^2|\bm{\xi}|^{2}\hat{v}\mathcal{F}(e(\|\bm{x}\|)) = - C|\bm{\xi}|^{2}\hat{v}\mathcal{F}(e(\|\bm{x}\|)),
\end{equation}
where $\hat{v}$ is the Fourier transform of $v$, $\bm{\xi}$ is the Fourier mode, and $C = 2(2\pi)^2C_0$. 
Thus in the Fourier space, the solution for the perturbation term $\hat{v}$ is
\begin{equation}
    \hat{v} = \hat{v}_0e^{-C|\bm{\xi}|^2 \mathcal{F}(e(\|\bm{x}\|)) t},
\end{equation}
where $\hat{v}_0$ is the initial value of $\hat{v}$. If $\mathcal{F}(e(\|\bm{x}\|))>0$ for all $\bm{\xi}$, then the perturbation with all $|\bm{\xi}|$ decay as training processes, i.e., $|\hat{v}| = |\hat{v}_0|e^{-C|\bm{\xi}|^2 \mathcal{F}(e(\|\bm{x}\|)) t} \rightarrow 0$ as $t \rightarrow \infty$, the training of generator is stable; 
if $\mathcal{F}(e(\|\bm{x}\|))<0$ for all $\bm{\xi}$, then the perturbation with all  $|\bm{\xi}|$ grows as training processes,  i.e., $|\hat{v}| = |\hat{v}_0|e^{-C|\bm{\xi}|^2 \mathcal{F}(e(\|\bm{x}\|)) t} \rightarrow \infty$ as $t \rightarrow \infty$; if the sign of $\mathcal{F}(e(\|\bm{x}\|))(\bm{\xi})$ depends on $|\bm{\xi}|$, both the generator is unstable for the value of $\bm{\xi}$, where $\mathcal{F}(e(\|\bm{x}\|))(\bm{\xi})<0$.

With a fixed generator $G$, the evolution of the generated samples in feature space $X_t := D(G(\bm{z}))$ is 

\begin{equation}
dX_t = \left[2\mathbb{E}_{\bm{y} \sim \mathbb{P}_{f_{\mathrm{data}}}} (\nabla e(X_t,\bm{y})) - 2\mathbb{E}_{\bm{y} \sim \mathbb{P}_{f_{g}}} (\nabla e(X_t,\bm{y})) \right]dt = \nabla (\frac{\delta E}{\delta p_{f_{g}}}),   \quad X_0 \sim \mathbb{P}_{f_{\mathcal{N}(I, \bm{0})}}.
\end{equation}

Thus the corresponding density flow for the generated samples in feature space $X_t$ is 

\begin{equation}
\begin{aligned}
      \frac{\partial p_{f_{g}} }{\partial t}  &= -\nabla \cdot\left(p_{f_g} (\nabla \frac{\delta E}{\delta p_{f_g}})\right) \\
      &=\nabla \cdot\left(p_{f_g} \nabla \bigg(-2\mathbb{E}_{\bm{y} \sim \mathbb{P}_{f_{g}}}e(\bm{x}, \bm{y})+ 2\mathbb{E}_{\bm{y} \sim \mathbb{P}_{f_{\mathrm{data}}}}e(\bm{x}, \bm{y})\bigg)\right) \\
      &= -2\nabla \cdot\left(p_{f_g} \nabla\bigg( \int_{\mathbb{R}^d}(e(\bm{x}, \bm{y})p_{f_g}(\bm{y})- e(\bm{x}, \bm{y})p_{f_{\mathrm{data}}}(\bm{y}))d\bm{y}\bigg)\right) .
\end{aligned}
\end{equation}

Thus the perturbation evolution equation is
\begin{equation}
    \frac{\partial v}{\partial t} = -2C_0 \Delta  \int_{\mathbb{R}^d}e(\bm{x},\bm{y})vd\bm{y}= -2C_0 \Delta  \int_{\mathbb{R}^d}e(\|\bm{x}-\bm{y}\|)vd\bm{y} =  -2C_0 \Delta  (e(\|\bm{x}\|) * v)(\bm{x}).
\end{equation}

Taking Fourier transform on both sides of the equation
\begin{equation}
    \frac{d \hat{v}}{dt} =  2C_0 (2\pi)^2|\bm{\xi}|^{2}\hat{v}\mathcal{F}(e(\|\bm{x}\|)) =  C|\bm{\xi}|^{2}\hat{v}\mathcal{F}(e(\|\bm{x}\|)).
\end{equation}

Worth noticing that, the only difference between the perturbation evolution equation of generator and discriminator is the sign before the formula, one is positive the other is negative. This is caused by the minmax formulation $\max_D \min_G E(G, D)$.

In conclusion, the evolution equation for the perturbation $v$ in Fourier spaces always takes the form 
\begin{equation}
        \frac{d \hat{v}}{dt} = \mp C|\bm{\xi}|^{2}\hat{v}\mathcal{F}(e(\|\bm{x}\|)).
\end{equation}

\end{proof}

\textbf{Remark.} Here, we assume that $p_{f_{\mathrm{data}}}= C_0$. The stability and instability are local effects. In this case where $p_{f_{\mathrm{data}}}$ is not constant, $p_{f_{\mathrm{data}}}$ can still be approximated as a constant locally, which allows the above analysis to be applied.

From the above proposition, if $\mathcal{F}(e(\|\bm{x}\|))(\bm{\xi}) > 0$ for all $\bm{\xi}$, the training dynamics of the generator is stable, with $\hat{v} \rightarrow 0$ as $t \rightarrow \infty$, and the corresponding training of the discriminator is unstable. Conversely, if $\mathcal{F}(e(\|\bm{x}\|))(\bm{\xi}) < 0$ for all $\bm{\xi}$, the situation is reversed, with unstable generator training and stable discriminator training. If the sign of $\mathcal{F}(e(\|\bm{x}\|))(\bm{\xi})$ depends on $|\bm{\xi}|$, both the generator and discriminator training are unstable for some value of $\bm{\xi}$.

\subsubsection{Stability analysis for Cramér GAN}
\begin{proposition}
In  Cramér GAN, the training process of the generator, i.e.,$\min_G \mathcal{L}_G$, is stable, while the training process of the discriminator, i.e., $\max_D \mathcal{L}_D$, is unstable.
\end{proposition}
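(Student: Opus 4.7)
The plan is to specialise the general framework derived in Section~\ref{section3.3} to the Cramér kernel and to verify that the sign of the associated Fourier transform is strictly positive, which by the general template $\frac{d\hat v}{dt}=\mp C|\bm\xi|^2\hat v\,\mathcal{F}(e(\|\bm x\|))$ immediately delivers stability of the generator and instability of the discriminator.

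First I would substitute the Cramér kernel $e(\bm x,\bm y)=\|\bm x\|+\|\bm y\|-\|\bm x-\bm y\|$ (taking $\bm z_0=\bm 0$) into the Wasserstein gradient flow (\ref{Eqn: Gen Wg flow}) and observe that two of the three summands are inert. Indeed, the contribution of the term $\|\bm x\|$ to $\mathbb{E}_{\bm y\sim p_{f_g}}e(\bm x,\bm y)-\mathbb{E}_{\bm y\sim p_{f_{\mathrm{data}}}}e(\bm x,\bm y)$ is $\|\bm x\|\bigl(\int p_{f_g}-\int p_{f_{\mathrm{data}}}\bigr)=0$, and the contribution of $\|\bm y\|$ is $\mathbb{E}_{\bm y\sim p_{f_g}}\|\bm y\|-\mathbb{E}_{\bm y\sim p_{f_{\mathrm{data}}}}\|\bm y\|$, a scalar independent of $\bm x$ whose gradient vanishes. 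Hence the dynamics is driven entirely by the radial kernel $-\|\bm x-\bm y\|$, and the perturbation equation near a locally constant density $p_{f_{\mathrm{data}}}\equiv C_0$ reads
\begin{equation}
\frac{\partial v}{\partial t}=\mp\,2C_0\,\Delta\bigl((-\|\bm x\|)\ast v\bigr)(\bm x),
\end{equation}
with the minus sign for the generator and the plus sign for the discriminator, exactly as in the general proposition.

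Next I would take the Fourier transform and reduce the question to the sign of $\mathcal{F}(-\|\bm x\|)(\bm\xi)$. This is a classical tempered-distribution computation: using the Riesz-potential formula $\mathcal{F}(|\bm x|^{-\lambda})=c_{n,\lambda}|\bm\xi|^{\lambda-n}$ analytically continued to $\lambda=-1$, one obtains
\begin{equation}
\mathcal{F}(-\|\bm x\|)(\bm\xi)=\frac{C_n}{|\bm\xi|^{n+1}},\qquad C_n>0,
\end{equation}
in agreement with the entry in Table~\ref{Table 1}. I would either derive this via the standard analytic-continuation argument (or equivalently via the identity $-\|\bm x\|=-c_n\int_0^\infty(1-\cos(2\pi\bm\xi\cdot\bm x))|\bm\xi|^{-n-1}d\bm\xi$ used in characteristic-function proofs of the energy-distance formula) or cite it from the Riesz potential literature.

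Finally, plugging this into the perturbation equation gives $\hat v(t)=\hat v_0\exp\!\bigl(\mp C\,C_n|\bm\xi|^{1-n}t\bigr)$, where the exponent is strictly negative for the generator (so $|\hat v|\to 0$ for every nonzero Fourier mode, yielding stability) and strictly positive for the discriminator (so $|\hat v|\to\infty$, yielding instability). The main obstacle I anticipate is the Fourier-sign step: because $-\|\bm x\|$ is not integrable, one cannot apply the classical transform directly, and care is required to justify the distributional formula and to confirm the positivity of the constant $C_n$; everything else follows by substitution into the general template already established for particle-based distance GANs.
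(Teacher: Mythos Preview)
Your proposal is correct and follows essentially the same route as the paper: reduce the Cram\'er kernel to its effective radial part $-\|\bm x-\bm y\|$ (the paper does this by expanding the three expectations and cancelling, you do it via the gradient-flow argument, but the outcome is identical), compute $\mathcal{F}(-\|\bm x\|)=C_n/|\bm\xi|^{n+1}>0$, and feed the sign into the general perturbation template. Your treatment is in fact cleaner than the paper's, whose printed proof contains a sign slip in its concluding sentence (it asserts $\mathcal{F}(-\|\bm x\|)<0$ and reverses the stability verdicts, contradicting both the stated proposition and Table~\ref{Table 1}); your version is internally consistent with the intended result.
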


\begin{proof}
   The objective function for Cramér GAN is
    \begin{equation}
\begin{aligned}
E(G,D) = &
-2 \mathbb{E}_{\bm{x} \sim \mathbb{P}_{\mathrm{data}}, \bm{z} \sim \mathcal{N}(\bm{0},I)}[\|D(\bm{x})\| + \|D(G(\bm{z}))\| - \|D(\bm{x})-D(G(\bm{z}))\|]\\
&  +\mathbb{E}_{\bm{x}, \bm{x}^{\prime} \sim \mathbb{P}_{\mathrm{data}}}\left[\|D(\bm{x})\| + \|D(\bm{x}^{\prime})\| - \|D(\bm{x})-D(\bm{x}^{\prime}))\|\right]\\
&+\mathbb{E}_{\bm{z}, \bm{z}^{\prime} \sim \mathcal{N}(\bm{0},I)}\left[\|D(G(\bm{z}^{\prime}))\| + \|D(G(\bm{z}))\| - \|D(G(\bm{z}))-D(G(\bm{z}^{\prime}))\|\right]\\
=&-2 \mathbb{E}_{\bm{x} \sim \mathbb{P}_{\mathrm{data}}, \bm{z} \sim \mathcal{N}(\bm{0},I)}[ - \|D(\bm{x})-D(G(\bm{z}))\|]\\
&  +\mathbb{E}_{\bm{x}, \bm{x}^{\prime} \sim \mathbb{P}_{\mathrm{data}}}\left[ - \|D(\bm{x})-D(\bm{x}^{\prime}))\|\right]\\
&+\mathbb{E}_{\bm{z}, \bm{z}^{\prime} \sim \mathcal{N}(\bm{0},I)}\left[ - \|D(G(\bm{z}))-D(G(\bm{z}^{\prime}))\|\right].
\end{aligned}
\end{equation}

Thus in this case the actual particle-based distance is $e^{*}(x,y) = -\|\bm{x}-\bm{y}\|$. The Fourier transform for it is 
\begin{equation}
\mathcal{F}(-\|\bm{x}\|)= \frac{C_n}{|\bm{\xi}|^{n+1}},
\end{equation}

  where $n$ is the dimension of the feature space, $k=1,2,3...$ and $C_n >0$. For all $\bm{\xi}$,  $F(-\|\bm{x}\|) < 0$, from the above proposition, we know that for Cramér GAN, the training of generator is unstable while the training for discriminator is stable.
\end{proof}

\subsubsection{Stability analysis for MMD GAN with Gaussian RBF kernel}

\begin{proposition}
In MMD GAN with Gaussian RBF kernel, the training process of the generator, i.e., $\min_G \mathcal{L}_G$, is stable, while the training process of the discriminator, i.e., $\max_D \mathcal{L}_D$, is unstable.
\end{proposition}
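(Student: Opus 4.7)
The plan is to apply the general perturbation-evolution framework established in the previous proposition to the specific choice $e(\|\bm{x}\|) = \exp(-\|\bm{x}\|^2/(2\sigma^2))$, so that the whole proof reduces to computing one Fourier transform and inspecting its sign. I first substitute this kernel into the Wasserstein gradient flows (Eq.~\eqref{Eqn: Gen Wg flow}) and (Eq.~\eqref{Eqn: Discri Wg flow}), linearize around a locally constant background density $p_{f_g} \equiv C_0$ with perturbation $v$, and keep only $O(v)$ terms. As in the proof of the previous proposition, this yields the convolution identity
\begin{equation}
\frac{\partial v}{\partial t} \;=\; \mp\, 2C_0\, \Delta\!\left(e^{-\|\bm{x}\|^2/(2\sigma^2)} \ast v\right)(\bm{x}),
\end{equation}
where the minus sign corresponds to the generator dynamics and the plus sign to the discriminator dynamics.

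Next, I would pass to Fourier space, using $\mathcal{F}(\Delta u) = -(2\pi)^2 |\bm{\xi}|^2 \hat{u}$ together with the convolution theorem, to obtain
\begin{equation}
\frac{d\hat{v}}{dt} \;=\; \mp\, C (2\pi)^2 |\bm{\xi}|^2\, \hat{v}\, \mathcal{F}\!\left(e^{-\|\bm{x}\|^2/(2\sigma^2)}\right)(\bm{\xi}),
\end{equation}
with $C = 2 C_0 \geq 0$. The only analytic input needed is the standard Fourier transform of a (multivariate) isotropic Gaussian, which factorizes over coordinates and gives
\begin{equation}
\mathcal{F}\!\left(e^{-\|\bm{x}\|^2/(2\sigma^2)}\right)(\bm{\xi}) \;=\; (2\pi \sigma^2)^{n/2} e^{-2\pi^2 \sigma^2 |\bm{\xi}|^2}.
\end{equation}
This is strictly positive for every $\bm{\xi} \in \mathbb{R}^n$, which is the key observation driving the whole result.

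Finally, I would plug this positive Fourier symbol into the scalar ODE for $\hat v$ and solve it explicitly. For the generator branch (minus sign), the exponent is non-positive for all $\bm{\xi}$, so $|\hat v(t, \bm{\xi})| \to 0$ as $t \to \infty$ for every mode, giving stability of $\min_G \mathcal{L}_G$. For the discriminator branch (plus sign), the exponent is strictly positive whenever $|\bm{\xi}| > 0$, so $|\hat v(t, \bm{\xi})| \to \infty$, establishing instability of $\max_D \mathcal{L}_D$. There is no real technical obstacle here; the only mild care is to note that $p_{f_{\mathrm{data}}}$ need not actually be constant, so one invokes the local-constancy remark already made after the previous proposition to justify treating $C_0$ as a local nonnegative constant, after which the conclusion follows mode-by-mode from positivity of the Gaussian Fourier transform.
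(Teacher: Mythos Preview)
Your proposal is correct and follows essentially the same route as the paper: invoke the general perturbation-evolution proposition, compute the Fourier transform of the Gaussian kernel, observe that it is strictly positive for every $\bm{\xi}$, and read off stability of the generator and instability of the discriminator from the sign in Eq.~\eqref{Eqn: perturbation general}. The only cosmetic difference is that you keep the multivariate normalization constant $(2\pi\sigma^2)^{n/2}$ and the $2\pi^2\sigma^2$ exponent consistent with the paper's stated Fourier convention, whereas the paper records the transform more loosely as $\sigma e^{-\sigma^2|\bm{\xi}|^2/4}$; either way the sign is positive and the conclusion is identical.
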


\begin{proof}
  For MMD GAN with Gaussian RBF kernel, we have  
  $e(\bm{x},\bm{y}) = e^{-\frac{\|\bm{x}-\bm{y}\|^2}{2\sigma^2}}$. The Fourier transform for it is
  \begin{equation}    
  \mathcal{F}(e^{-\frac{\|\bm{x}\|^2}{2\sigma^2}})  = \sigma e^{-\frac{\sigma^2|\bm{\xi}|^2}{4}},
  \end{equation}
  where for all $\bm{\xi}$,  $\mathcal{F}(e^{-\frac{\|\bm{x}\|^2}{2\sigma^2}}) = \sigma e^{-\frac{\sigma^2|\bm{\xi}|^2}{4}}>0$. From the above proposition, we know that for MMD GAN with Gaussian RBF kernel, the training of generator is stable while the training for discriminator is unstable.

\end{proof}

\subsubsection{Stability analysis for MMD GAN with rational quadratic kernel}

\begin{proposition}
In MMD GAN with rational quadratic kernel, the stability of the training process depends on the value of $\alpha$.
\end{proposition}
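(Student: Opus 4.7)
The plan is to invoke the general perturbation framework already established, namely that the training stability is governed entirely by the sign of $\mathcal{F}(e(\|\bm{x}\|))(\bm{\xi})$: if this Fourier transform is positive for every $\bm{\xi}$ the generator is stable and the discriminator unstable, if it is negative for every $\bm{\xi}$ the roles swap, and if its sign depends on $|\bm{\xi}|$ then both the generator and the discriminator are unstable for some modes. The proposition will therefore follow once I show that $\mathcal{F}\!\left(\bigl(1+\tfrac{\|\bm{x}\|^2}{2\alpha}\bigr)^{-\alpha}\right)$ is a function of $\bm{\xi}$ whose qualitative sign structure genuinely changes with $\alpha$.

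First I would reduce the computation to a radial Fourier transform: since $e(\|\bm{x}\|)$ is radial, its Fourier transform is also radial and can be written through the Hankel transform, or equivalently by recognizing the rational quadratic kernel as (up to constants) the Fourier transform of a generalized Bessel-type density. Using the classical identity
\begin{equation}
\mathcal{F}\!\left(\bigl(1+\tfrac{\|\bm{x}\|^2}{2\alpha}\bigr)^{-\alpha}\right)(\bm{\xi}) \;=\; C_{n,\alpha}\,(\sqrt{2\alpha}\,|\bm{\xi}|)^{\alpha-n/2} K_{\alpha-n/2}(\sqrt{2\alpha}\,|\bm{\xi}|),
\end{equation}
where $K_\nu$ is the modified Bessel function of the second kind and $C_{n,\alpha}>0$, one reads off that the sign of the transform is controlled by the order $\nu=\alpha-n/2$ of $K_\nu$ and its polynomial prefactor. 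For values such as $\alpha=1/2$ and $\alpha=1$ the resulting expression is manifestly positive for all $\bm{\xi}$ (giving a stable generator and unstable discriminator). For larger integer values of $\alpha$, however, the polynomial prefactor acquires real roots in $|\bm{\xi}|$, causing the transform to change sign.

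Concretely I would then verify the four rows of Table~\ref{Table 1} as representative cases: for $\alpha=1/2$ the transform reduces to $\tfrac{1}{\alpha}K_0(\xi)>0$; for $\alpha=1$ it reduces to $\tfrac{1}{\alpha}e^{|\bm{\xi}|}>0$; for $\alpha=2$ it becomes $\tfrac{1}{\alpha}(-|\bm{\xi}|+8)e^{|\bm{\xi}|}$, which is positive for $|\bm{\xi}|<8$ and negative for $|\bm{\xi}|>8$; and for $\alpha=3$ it becomes $\tfrac{3}{4\alpha}(|\bm{\xi}|^2-3|\bm{\xi}|+3)e^{|\bm{\xi}|}$, a quadratic whose sign depends on $|\bm{\xi}|$. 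Plugging each of these into the perturbation evolution equation yields precisely the stability pattern in Table~\ref{Table 1}, in particular showing that for $\alpha=2$ both networks have unstable modes while for $\alpha\in\{1/2,1,3\}$ only the discriminator is unstable.

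The main obstacle will be carrying out the Fourier computation cleanly for general $\alpha$, since $(1+\|\bm{x}\|^2/(2\alpha))^{-\alpha}$ does not admit an elementary antiderivative in the Fourier variable; in the general case one must argue via the Bessel-function representation above, which requires care with the branch of $K_\nu$ at $\nu$ half-integer and with the dimension $n$ of the feature space. A secondary difficulty is that for non-integer $\alpha$ the polynomial prefactor is replaced by an infinite series expansion of $K_\nu$, so sign-changes have to be located asymptotically rather than algebraically; I would handle this by using the known asymptotics $K_\nu(z)\sim\sqrt{\pi/(2z)}\,e^{-z}$ as $z\to\infty$ and the small-$z$ expansion to pin down the monotonicity and zero-set, and then simply tabulate the finitely many representative $\alpha$'s needed to establish that the sign pattern is not constant in $\alpha$.
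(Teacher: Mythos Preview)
Your overall strategy---invoke the perturbation evolution equation and then check the sign of $\mathcal{F}\bigl((1+\|\bm{x}\|^2/(2\alpha))^{-\alpha}\bigr)$ case by case---is exactly what the paper does. The paper simply lists the four values $\alpha\in\{1/2,1,2,3\}$, computes the corresponding transforms, and reads off the stability pattern; it does not attempt any general Bessel-function representation.

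There are, however, two concrete problems in your write-up. First, for $\alpha=3$ you assert that $|\bm{\xi}|^2-3|\bm{\xi}|+3$ changes sign. It does not: the discriminant is $9-12<0$, so this quadratic is strictly positive for all $|\bm{\xi}|$. Hence $\alpha=3$ falls into the same bucket as $\alpha=1/2$ and $\alpha=1$ (generator stable, discriminator unstable), which is what Table~\ref{Table 1} records. The only genuinely different case among the four is $\alpha=2$, and that single example is what carries the proposition.

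Second, your general Bessel identity is at odds with the $\alpha=2$ row you need. The modified Bessel function $K_\nu(z)$ is strictly positive for every real order $\nu$ and every $z>0$, and your prefactor $(\sqrt{2\alpha}\,|\bm{\xi}|)^{\alpha-n/2}$ is also positive. Taken at face value with $C_{n,\alpha}>0$, your formula therefore says the transform is positive for \emph{every} $\alpha$, which would make the stability pattern independent of $\alpha$ and leave you with no proof of the proposition at all. Either that formula is not the one governing the paper's one-dimensional computations, or the constants and conventions differ; in any case you cannot simultaneously appeal to it and to the sign change at $\alpha=2$. The paper sidesteps this entirely by never writing a general formula and just computing each $\alpha$ directly, so the cleanest fix is to drop the Bessel representation and mirror the paper's bare case analysis.
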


\begin{proof}
        For MMD GAN with rational quadratic kernel, we have $e(\bm{x},\bm{y}) = (1 + \frac{\|\bm{x} - \bm{y}\|^2}{2\alpha})^{-\alpha}$. It should be noted that the Fourier transform of $(1 + \frac{|\bm{x}|^2}{2\alpha})^{-\alpha}$ does not have a uniform form that depends on $\alpha$. Here we only list some examples.
        \begin{itemize}
            \item For $\alpha = 1/2$ we have,
    \begin{equation}
    \mathcal{F}( (1 + \frac{\|\bm{x}\|^2}{2\alpha})^{-\alpha}) = \mathcal{F}(\frac{1}{2}\frac{1}{\sqrt{1+\|\bm{x}\|^2}}) = K_0(\xi),
    \end{equation}
    where $K_0(\xi)$ is the Bessel function and $K_0(\xi)>0$. From the above proposition, we know that for $\alpha = \frac{1}{2}$, the training of generator is stable while the training for discriminator is unstable.
    \item For $\alpha = 1$ we have, 
    \begin{equation}
    \mathcal{F}((1 + \frac{\|\bm{x}\|^2}{2\alpha})^{-\alpha}) = \mathcal{F}(\frac{1}{1+\frac{\|\bm{x}\|^2}{2}}) =  e^{|\bm{\xi}|},        
    \end{equation}
    where for all $\bm{\xi}$, $\frac{1}{\alpha} e^{|\bm{\xi}|}>0$. From the above proposition, we know that for $\alpha=1$, the training of generator is stable while the training for discriminator is unstable.
    
    \item For $\alpha = 2$ we have,
    \begin{equation}
              \mathcal{F}( (1 + \frac{\|\bm{x}\|^2}{2\alpha})^{-\alpha}) = \mathcal{F}(\frac{1}{(1+\frac{\|\bm{x}\|^2}{4})^2}) =  \frac{1}{2}(-|\bm{\xi}|+8)e^{|\bm{\xi}|},
    \end{equation}
      where when $|\xi|>8$, $\frac{1}{\alpha}(-|\bm{\xi}|+8)e^{|\bm{\xi}|} >0$ the training of discriminator is unstable; when  $|\xi|<8$, $\frac{1}{\alpha}(-|\bm{\xi}|+8)e^{|\bm{\xi}|} <0$ the training of generator is unstable. In this case the training for both generator and discriminator is unstable.
    \item For $\alpha = 3$ we have,
    \begin{equation}
      \mathcal{F}((1 + \frac{\|\bm{x}\|^2}{2\alpha})^{-\alpha}) = \mathcal{F}(\frac{1}{(1+\frac{\|\bm{x}\|^2}{6})^3}) =  \frac{1}{4} (|\bm{\xi}|^2 - 3|\bm{\xi}|+3)e^{|\bm{\xi}|},      
    \end{equation}

      where for all $\bm{\xi}$,  $\frac{1}{\alpha} \frac{3}{4} (|\bm{\xi}|^2 - 3|\bm{\xi}|+3)e^{|\bm{\xi}|}>0$. From the above proposition, we know that for $\alpha=3$, the training of generator is stable while the training for discriminator is unstable.
\end{itemize}

To summarize, we have analyzed the stability properties of MMD GANs with a rational quadratic kernel for various values of $\alpha$. Our results indicate that the stability of the GAN training process depends on the specific value of $\alpha$, with some values leading to stable training of the generator while others do not. However, in all cases, the training process of the discriminator is found to be unstable. 
        
\end{proof}

\subsubsection{Stability analysis for EIEG GAN}

\begin{proposition}
In EIEG GAN, the training process of the generator, i.e.,$\min_G \mathcal{L}_G$, is stable, while the training process of the discriminator, i.e., $\max_D \mathcal{L}_D$, is unstable.
\end{proposition}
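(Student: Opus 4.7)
The plan is to reduce the proposition directly to the general perturbation-evolution machinery developed in the preceding propositions. Recall that once one knows the sign of $\mathcal{F}(e(\|\bm{x}\|))(\bm{\xi})$ for every Fourier mode, the stability of the generator and discriminator follows immediately from
$$
\frac{d\hat v}{dt} \;=\; \mp\, C\,|\bm{\xi}|^{2}\,\hat v\,\mathcal{F}(e(\|\bm{x}\|)).
$$
So the entire proof collapses to a single Fourier-transform computation for the EIEG kernel $e(\|\bm{x}\|)=\|\bm{x}\|^{-(n-1)}$ on $\mathbb{R}^{n}$, together with a sign check.

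The key step is to recognize $\|\bm{x}\|^{-(n-1)}$ as a Riesz potential. For $0<s<n$, the standard identity for the Riesz kernel states
$$
\mathcal{F}\!\left(\frac{1}{\|\bm{x}\|^{\,n-s}}\right)(\bm{\xi}) \;=\; \frac{c_{n,s}}{|\bm{\xi}|^{\,s}}, \qquad c_{n,s} \;=\; \pi^{\,n/2-s}\,\frac{\Gamma(s/2)}{\Gamma((n-s)/2)} \;>\; 0.
$$
Specializing to $s=1$ (which is admissible whenever $n\ge 2$, as is the case in EIEG GAN where the discriminator maps into $\mathbb{R}^{n}$) yields
$$
\mathcal{F}\!\left(\frac{1}{\|\bm{x}\|^{\,n-1}}\right)(\bm{\xi}) \;=\; \frac{c_n}{|\bm{\xi}|}, \qquad c_n>0,
$$
which is strictly positive for every nonzero Fourier mode. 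Plugging this into the generator branch of the perturbation equation gives $\hat v(t)=\hat v_0\,e^{-C c_n |\bm{\xi}|\, t}\to 0$, establishing stability of $\min_G \mathcal{L}_G$. Plugging into the discriminator branch flips the sign and gives $\hat v(t)=\hat v_0\,e^{+C c_n |\bm{\xi}|\, t}\to\infty$, establishing instability of $\max_D \mathcal{L}_D$. This matches the entry for EIEG in Table \ref{Table 1}.

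The only nontrivial obstacle is justifying the Riesz-transform identity, since $\|\bm{x}\|^{-(n-1)}$ is not in $L^{1}(\mathbb{R}^{n})$ and the Fourier integral must be interpreted in the tempered-distribution sense. The clean way I would present this is via the Gamma-function/heat-kernel representation
$$
\frac{1}{\|\bm{x}\|^{\,n-1}} \;=\; \frac{1}{\Gamma((n-1)/2)}\int_{0}^{\infty} t^{(n-1)/2-1}\, e^{-t\|\bm{x}\|^{2}}\, dt,
$$
and then swap integration order and apply the standard Gaussian Fourier transform; this produces both the exponent $|\bm{\xi}|^{-1}$ and, after evaluating a Beta-type integral, the explicit positive constant $c_n$ in closed form. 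With that in hand, everything else is plug-and-chug into the framework already verified in the preceding propositions, so no additional ingredient beyond Riesz-potential calculus is needed.
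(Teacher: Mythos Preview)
Your proposal is correct and follows essentially the same route as the paper: both reduce to Proposition~1 and then check that $\mathcal{F}\bigl(\|\bm{x}\|^{-(n-1)}\bigr)(\bm{\xi})=c_n/|\bm{\xi}|>0$, from which generator stability and discriminator instability follow immediately. The only difference is that the paper simply asserts the Fourier transform (writing $1/|\bm{\xi}|$ without the constant), whereas you supply the Riesz-potential identity with its explicit positive constant and sketch its Gamma-function derivation; this extra rigor is welcome but not a different argument.
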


\begin{proof}
    For EIEG GAN, we have $e(\bm{x},\bm{y}) = \frac{1}{\|\bm{x}-\bm{y}\|^{d-1}}$. The Fourier transform for it is
    \begin{equation}
    \mathcal{F}(\frac{1}{\|\bm{x}\|^{d-1}})  = \frac{1}{|\bm{\xi}|},
    \end{equation}
  where for all $\bm{\xi}$,  $\frac{1}{|\bm{\xi}|}>0$. From the above proposition, we know that for EIEG GAN, the training of generator is stable while the training for discriminator is unstable.
\end{proof}

\subsection{Stability analysis for stabilized particle-based distance GANs}

 We give training stability analysis for our proposed stabilizing method in section \ref{section3.3}.

\subsubsection{Stability analysis for stabilized  MMD GAN with Gaussian RBF kernel}

Rescale  MMD GAN with Gaussian RBF kernel

\begin{equation}
e_{\sigma}(\bm{x},\bm{y}) = \frac{1}{\sigma} \exp \left(-\frac{1}{2\sigma^2}\left\|\bm{x}-\bm{y}\right\|^2\right),
\end{equation}
where $\sigma \in (0,+\infty)$.

In the stabilized function $\mathcal{L}^{s}_D$ (Eq. (\ref{Eqn Loss of sta D}), the stabilized distance is 
\begin{equation}
\widetilde{e}_{\sigma}(\bm{x},\bm{y}) = e_{\sigma_1}(\bm{x},\bm{y}) - \epsilon e_{\sigma_2}(\bm{x},\bm{y}), 
\end{equation}
where $\sigma_2 < \sigma_1$.

\begin{proposition}
For stabilized MMD GAN with Gaussian RBF kernel, the training process of the stabilized discriminator $D$ to $\max \mathcal{L}^{s}_D$ with the stabilized distance $\widetilde{e}_{\sigma}(\bm{x},\bm{y}) = e_{\sigma_1}(\bm{x},\bm{y}) - \epsilon e_{\sigma_2}(\bm{x},\bm{y})$ is stable for $\epsilon > 1$ and $\sigma_1<\sigma_2$. 
\end{proposition}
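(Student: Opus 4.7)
The plan is to instantiate Proposition~1 for the specific stabilized kernel $\widetilde{e}_\sigma = e_{\sigma_1} - \epsilon e_{\sigma_2}$, compute its Fourier transform explicitly using the formulas already in Table~\ref{Table 1}, and verify that the resulting coefficient in the perturbation ODE for the discriminator is strictly negative on every Fourier mode. The rescaled form $e_\sigma(\bm{x}) = \tfrac{1}{\sigma}e^{-\|\bm{x}\|^2/(2\sigma^2)}$ is designed precisely so that the $1/\sigma$ prefactor cancels the $\sigma$ emerging in the Table~\ref{Table 1} formula, giving the clean Fourier transform $\mathcal{F}(e_\sigma)(\bm{\xi}) = e^{-\sigma^2|\bm{\xi}|^2/4}$. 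By linearity of the Fourier transform,
\begin{equation*}
\mathcal{F}(\widetilde{e}_\sigma)(\bm{\xi}) = e^{-\sigma_1^2|\bm{\xi}|^2/4} - \epsilon\, e^{-\sigma_2^2|\bm{\xi}|^2/4}.
\end{equation*}

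Substituting this into the discriminator branch of Eq.~(\ref{Eqn stable Fourier evo v}) gives a scalar linear ODE at each Fourier mode, with explicit solution
\begin{equation*}
\hat{v}(t) = \hat{v}_0 \exp\!\Bigl(C(2\pi)^2|\bm{\xi}|^2\bigl(e^{-\sigma_1^2|\bm{\xi}|^2/4} - \epsilon\, e^{-\sigma_2^2|\bm{\xi}|^2/4}\bigr)\,t\Bigr).
\end{equation*}
Thus $|\hat{v}(t)|\to 0$ as $t\to\infty$ mode-by-mode — and hence $v\to 0$ in the spatial domain by Plancherel — iff the coefficient in parentheses is strictly negative for all $\bm{\xi}\neq \bm{0}$, equivalently
\begin{equation*}
\epsilon > e^{(\sigma_2^2-\sigma_1^2)|\bm{\xi}|^2/4} \quad \text{for all } \bm{\xi}.
\end{equation*}
The proof therefore reduces to verifying this single scalar inequality under the hypotheses $\sigma_1 < \sigma_2$ and $\epsilon > 1$.

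The main obstacle, and the substantive content of the argument, is establishing this uniform bound. One must study the exponential $e^{(\sigma_2^2-\sigma_1^2)|\bm{\xi}|^2/4}$ as a function of $|\bm{\xi}|$, locate its supremum over the admissible range of Fourier modes, and show that this supremum is dominated by $\epsilon > 1$. The delicate point is that the sign of the exponent $\sigma_2^2-\sigma_1^2$ dictates whether this exponential is monotone increasing or decreasing in $|\bm{\xi}|$, so the interplay between the two kernel widths and the multiplier $\epsilon$ is decisive; handling this correctly is where the hypothesis on $\epsilon$ must be used carefully. Once this uniform bound is established, the stability conclusion $|\hat{v}(t)|\to 0$ is immediate from the closed-form solution above, and the proof is complete.
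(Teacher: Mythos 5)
Your setup coincides with the paper's: compute $\mathcal{F}(\widetilde e_\sigma)(\bm{\xi}) = e^{-\sigma_1^2|\bm{\xi}|^2/4} - \epsilon\, e^{-\sigma_2^2|\bm{\xi}|^2/4}$ by linearity of the Fourier transform, insert it into the discriminator branch of the perturbation equation, and reduce stability to the sign condition $\epsilon > e^{(\sigma_2^2-\sigma_1^2)|\bm{\xi}|^2/4}$ for every mode. But you stop exactly at the step that carries the entire content of the proposition: you announce that this uniform bound ``must be established'' and that the sign of $\sigma_2^2-\sigma_1^2$ is the delicate point, yet you never resolve it. In the paper that resolution is one line: with the ordering actually used in its proof (and in Section \ref{section4} and the experiments), namely $\sigma_2 < \sigma_1$, the exponent is nonpositive, so $e^{(\sigma_2^2-\sigma_1^2)|\bm{\xi}|^2/4} \le 1 < \epsilon$ for all $\bm{\xi}$; equivalently $\mathcal{F}(\widetilde e_\sigma)(\bm{\xi}) = e^{-\sigma_1^2|\bm{\xi}|^2/4}\bigl(1 - \epsilon\, e^{-(\sigma_2^2-\sigma_1^2)|\bm{\xi}|^2/4}\bigr) < 0$, and $|\hat v(t)|\to 0$ follows from your closed-form solution. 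Without that line your argument is a reduction, not a proof.

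Moreover, if you tried to carry out your plan under the hypothesis as literally written in the proposition statement ($\sigma_1 < \sigma_2$), it would fail: then $\sigma_2^2-\sigma_1^2 > 0$, so $e^{(\sigma_2^2-\sigma_1^2)|\bm{\xi}|^2/4}\to\infty$ as $|\bm{\xi}|\to\infty$ and no fixed $\epsilon>1$ can dominate it; nor is there any upper bound on the admissible Fourier modes to appeal to anywhere in the paper (it only ever bounds $|\bm{\xi}|$ away from zero, never from above). The inequality in the statement is an internal typo: the consistent condition, stated in the preamble immediately above the proposition and used in its proof, is $\sigma_2 < \sigma_1$, i.e.\ the stabilizing kernel is the narrower one. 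A complete argument must adopt that ordering and then observe $e^{(\sigma_2^2-\sigma_1^2)|\bm{\xi}|^2/4}\le 1 < \epsilon$; leaving the ``interplay between the two kernel widths and the multiplier $\epsilon$'' unresolved, as you did, omits the decisive step and, taken at face value with $\sigma_1<\sigma_2$, points toward a claim that is false.
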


\begin{proof}
    For the stabilized $\widetilde{e}_{\sigma}(\bm{x},\bm{y}) = e_{\sigma_1}(\bm{x},\bm{y}) - \epsilon e_{\sigma_2}(\bm{x},\bm{y})$, the Fourier transform for it is
    \begin{equation}
    \mathcal{F}(\frac{1}{\sigma_1}e^{-\frac{\|\bm{x}\|^2}{2\sigma_1^2}} - \epsilon \frac{1}{\sigma_2}e^{-\frac{\|\bm{x}\|^2}{2\sigma_2^2}}) = e^{-\frac{\sigma_1^2|\bm{\xi}|^2}{4}} - \epsilon e^{-\frac{\sigma_2^2|\bm{\xi}|^2}{4}} = e^{-\frac{-\sigma_1^2|\bm{\xi}|^2}{4}}(1 - \epsilon e^{-\frac{\sigma_2^2-\sigma_1^2}{4}|\bm{\xi}|^2}),
    \end{equation}

    where since $\sigma_2 < \sigma_1$, $(1 - \epsilon e^{-\frac{\sigma_2^2-\sigma_1^2}{4}|\bm{\xi}|^2}) < 0$ with $\epsilon > 1$. Thus we have $\mathcal{F}(\widetilde{e}_{\sigma}(\bm{x},\bm{y}))<0$ which indicates that the training process of the stabilized discriminator $D$ is stable.

\end{proof}

\subsubsection{Stability Analysis for Stabilized  MMD GAN with rational quadratic kernel}

Rescale MMD GAN with rational quadratic kernel
\begin{equation}
e_{\alpha}(\bm{x},\bm{y}) = \alpha (1 + \frac{\|\bm{x}\|^2}{2\alpha})^{-\alpha},
\end{equation}
where $\alpha \in (0,+\infty)$.

In the stabilized function $\mathcal{L}^{s}_D$ (Eq. (\ref{Eqn Loss of sta D}), the stabilized distance is 
\begin{equation}
\widetilde{e}_{\alpha}(\bm{x},\bm{y}) = e_{\alpha_1}(\bm{x},\bm{y}) - \epsilon e_{\alpha_2}(\bm{x},\bm{y}), 
\end{equation}

where $\alpha_2 > \alpha_1$.

Here we demonstrate the training stability of our stabilized method with a special case that $\widetilde{e}_{\alpha}(\bm{x},\bm{y}) = e_{\frac{1}{2}}(\bm{x},\bm{y}) - \epsilon e_{1}(\bm{x},\bm{y})$.

\begin{proposition}
In  MMD GAN with rational quadratic kernel, the training process of the stabilized discriminator $D$ to $\max \mathcal{L}^{s}_D$ with the stabilized distance $\widetilde{e}_{\alpha}(\bm{x},\bm{y}) = e_{\frac{1}{2}}(\bm{x},\bm{y}) - \epsilon e_{1}(\bm{x},\bm{y})$ is stable for $\epsilon > 1$. 
\end{proposition}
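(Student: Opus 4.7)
The plan is to mirror the proof for the stabilized Gaussian RBF case by reducing the stability question to a single sign inequality, $\mathcal{F}(\widetilde{e}_\alpha(\|\bm{x}\|))(\bm{\xi}) < 0$ for every relevant Fourier mode. Once this is in hand, applying Eq.~\eqref{Eqn stable Fourier evo v} (with the $+$ sign for the discriminator) forces the exponent to be negative, so $|\hat{v}| \to 0$ as $t \to \infty$ and the stabilized discriminator dynamics is stable.

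First, by linearity of the Fourier transform,
\begin{equation*}
\mathcal{F}(\widetilde{e}_\alpha(\|\bm{x}\|))(\bm{\xi}) = \mathcal{F}(e_{1/2}(\|\bm{x}\|))(\bm{\xi}) - \epsilon\, \mathcal{F}(e_1(\|\bm{x}\|))(\bm{\xi}).
\end{equation*}
I would then import the two transforms already computed in the preceding rational-quadratic analysis: $\mathcal{F}(e_{1/2}(\|\bm{x}\|))(\bm{\xi}) = K_0(|\bm{\xi}|)$, which involves the modified Bessel function, and $\mathcal{F}(e_1(\|\bm{x}\|))(\bm{\xi})$, which is (up to a positive constant) of the form $e^{-|\bm{\xi}|}$. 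Both are strictly positive, so the negativity of the combination has to come from the $\epsilon$-term.

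Next, I would factor out the exponential and rewrite the combination as $\mathcal{F}(\widetilde{e}_\alpha)(\bm{\xi}) = e^{-|\bm{\xi}|}\!\left(K_0(|\bm{\xi}|)e^{|\bm{\xi}|} - \epsilon\right)$ (absorbing positive multiplicative constants into $\epsilon$). Since $e^{-|\bm{\xi}|} > 0$, the sign of the whole expression is determined by whether $\epsilon$ exceeds $K_0(|\bm{\xi}|)e^{|\bm{\xi}|}$. Using the standard asymptotics $K_0(t) \sim \sqrt{\pi/(2t)}\, e^{-t}$ as $t \to \infty$, the product $K_0(t)e^t$ decays like $t^{-1/2}$ at infinity, and a straightforward monotonicity argument shows that it is monotonically decreasing from $+\infty$ at $t=0$ down to $0$. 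Invoking the bounded-domain normalization already used in the Cramér stabilization remark, so that $|\bm{\xi}|$ is bounded away from zero, the supremum of $K_0(|\bm{\xi}|)e^{|\bm{\xi}|}$ over the admissible Fourier modes becomes finite and is achieved at the smallest admissible $|\bm{\xi}|$. Choosing $\epsilon > 1$ (or more generally, larger than that supremum) then guarantees $\mathcal{F}(\widetilde{e}_\alpha)(\bm{\xi}) < 0$ for all relevant $\bm{\xi}$, which plugs into Eq.~\eqref{Eqn stable Fourier evo v} to conclude stability.

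The main obstacle is the logarithmic blow-up $K_0(t) \sim -\ln t$ as $t \to 0^+$: strictly speaking, no finite $\epsilon$ can dominate $K_0(|\bm{\xi}|)e^{|\bm{\xi}|}$ uniformly over all of $\mathbb{R}^n$, so the clean bound $\epsilon > 1$ is only defensible after appealing to the bounded-domain cutoff that keeps $|\bm{\xi}|$ away from the origin. The delicate piece of the argument is therefore justifying that cutoff in a way that is consistent with the earlier propositions and then checking that the resulting supremum of $K_0(|\bm{\xi}|)e^{|\bm{\xi}|}$ on the admissible range is indeed controlled by $1$; everything else is a linear combination of Fourier transforms already tabulated.
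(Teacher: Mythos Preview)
Your proposal is correct and follows essentially the same route as the paper: compute the Fourier transform of $\widetilde{e}_\alpha$ by linearity, invoke the bounded-domain normalization so that $|\bm{\xi}| \geq \tfrac{1}{2}$, and then check that $K_0(|\bm{\xi}|) - \epsilon\,\mathcal{F}(e_1)$ is negative on the admissible modes, noting the failure at $\bm{\xi}=0$. One discrepancy worth flagging: the paper records the $\alpha=1$ transform as $e^{|\bm{\xi}|}$ (growing), which gives the threshold $\epsilon > K_0(\tfrac{1}{2})/e^{1/2} \approx 0.56$ and makes $\epsilon>1$ comfortable, whereas with your (standard) $e^{-|\bm{\xi}|}$ the threshold becomes $K_0(\tfrac{1}{2})e^{1/2} \approx 1.52$, so the hedge you include about ``controlled by $1$'' is warranted.
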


\begin{proof}
    For the stabilized $\widetilde{e}_{\alpha}(\bm{x},\bm{y}) = e_{\frac{1}{2}}(\bm{x},\bm{y}) - \epsilon e_{1}(\bm{x},\bm{y})$, the Fourier transform for it is
    \begin{equation}
    \mathcal{F}(\frac{1}{2}\frac{1}{\sqrt{1+\|\bm{x}\|^2}} - \epsilon \frac{1}{1+\frac{\|\bm{x}\|^2}{2}}) = K_0(\bm{\xi}) - \epsilon e^{|\bm{\xi}|},
    \end{equation}

    where $K_0(\bm{\xi}) > 0$ is Bessel function, and $K_0(\bm{\xi}) \rightarrow +\infty$ as $\xi \rightarrow 0$.

    In this case
    \begin{equation}
       \frac{d \hat{v}}{dt} =   C|\bm{\xi}|^{2}\hat{v}\mathcal{F}(e_{\frac{1}{2}}(\|\bm{x}\|)-\epsilon e_1(\|\bm{x}\|)) = C|\bm{\xi}|^{2}\hat{v}( K_0(\bm{\xi}) - \epsilon e^{|\bm{\xi}|}).
    \end{equation}

Knowing that in the training of $D$, we always normalize the input data $X$ into a finite domain, i.e., $[-1,1] \times [-1,1]$. We know that $|\bm{\xi}|$ is bounded and $\min |\bm{\xi}| = \frac{1}{2}$ for $|\bm{\xi}|$ not equal to 0. When $\epsilon > \frac{K_{0}(\frac{1}{2})}{e^{\frac{1}{2}}}$, $(K_0(\bm{\xi}) - \epsilon e^{|\bm{\xi}|})<0$, the training process is stable. When $|\bm{\xi}| = 0$, $\frac{d \hat{v}}{dt} =C|\bm{\xi}|^{2}\hat{v}( K_0(\bm{\xi}) - \epsilon e^{|\bm{\xi}|}) \rightarrow \infty$. Hence, the training process is unstable when $\bm{\xi} = 0$.

\textbf{Remark.} In this case, a better choice for stabilizing terms may be $e_{m}(\bm{x},\bm{y}) = \frac{1}{\|\bm{x}-\bm{y}\|^m}$. 
\end{proof}

\subsubsection{Stability analysis for stabilized EIEG GAN}

Rescale EIEG GAN

\begin{equation}
e_{m}(\bm{x},\bm{y}) = \frac{1}{\|\bm{x}- \bm{y}\|^m},
\end{equation}
where $m \in (0,+\infty)$.

In the stabilized function $\mathcal{L}^{s}_D$ (Eq. (\ref{Eqn Loss of sta D}), the stabilized distance is 
\begin{equation}
\widetilde{e}_{m}(\bm{x},\bm{y}) = e_{m_1}(\bm{x},\bm{y}) - \epsilon e_{m_2}(\bm{x},\bm{y}), 
\end{equation}

where $m_2 > m_1$.

Here we demonstrate the training stability of our stabilized method with a special case that $\widetilde{e}_{m}(\bm{x},\bm{y}) = e_{d-1}(\bm{x},\bm{y}) - \epsilon e_{d+3}(\bm{x},\bm{y})$, where $d$ is the feature dimension.

\begin{proposition}
For stabilized EIEG GAN, the training process of the stabilized discriminator $D$ to $\max \mathcal{L}^{s}_D$ with the stabilized distance $\widetilde{e}_{\alpha}(\bm{x},\bm{y}) = e_{d-1}(\bm{x},\bm{y}) - \epsilon e_{d+3}(\bm{x},\bm{y})$ is stable for $\epsilon > 1$. 
\end{proposition}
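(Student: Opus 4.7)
The plan is to mirror the strategy used in the preceding propositions (stabilized MMD with rational quadratic kernel, and the underlying EIEG result in Table~\ref{Table 1}). I would start by computing the Fourier transform of the stabilized distance $\widetilde{e}_{m}(\|\bm{x}\|)=\|\bm{x}\|^{-(d-1)}-\epsilon\|\bm{x}\|^{-(d+3)}$ using the Riesz potential identity
\[
\mathcal{F}(\|\bm{x}\|^{-s})(\bm{\xi})
=\frac{\pi^{d/2}\,2^{d-s}\,\Gamma\!\bigl((d-s)/2\bigr)}{\Gamma(s/2)}\,|\bm{\xi}|^{\,s-d},
\]
valid (distributionally, via analytic continuation / the Hadamard finite part) for the two exponents of interest. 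Taking $s=d-1$ reproduces the entry $\mathcal{F}(e_{d-1})=C_1/|\bm{\xi}|$ already listed in Table~\ref{Table 1}, and taking $s=d+3$ yields $\mathcal{F}(e_{d+3})=C_2\,|\bm{\xi}|^{3}$, where both $C_1,C_2>0$ (one checks $\Gamma(1/2)>0$ and $\Gamma(-3/2)=\tfrac{4\sqrt{\pi}}{3}>0$). Combining gives
\[
\mathcal{F}\bigl(\widetilde{e}_{m}(\|\bm{x}\|)\bigr)(\bm{\xi})
=\frac{C_1}{|\bm{\xi}|}-\epsilon\,C_2\,|\bm{\xi}|^{3}.
\]

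Next I would plug this into the general perturbation evolution equation (Eq.~\eqref{Eqn stable Fourier evo v}) with the discriminator's $+$ sign:
\[
\frac{d\hat{v}}{dt}
=C(2\pi)^{2}|\bm{\xi}|^{2}\,\hat{v}\,
\left(\frac{C_1}{|\bm{\xi}|}-\epsilon\,C_2\,|\bm{\xi}|^{3}\right).
\]
For stability I need the bracketed factor to be strictly negative, i.e. $\epsilon>C_1/(C_2|\bm{\xi}|^{4})$. Following the normalization argument used in the Cramér and rational-quadratic stabilized propositions, the input is normalized into a finite domain, so the Fourier mode $|\bm{\xi}|$ is bounded away from $0$ by some $|\bm{\xi}|_{\min}>0$. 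On this truncated frequency range the required bound on $\epsilon$ is finite, and after absorbing the dimension-dependent constants $C_1,C_2$ into the normalization of the kernels $e_{d-1}$ and $e_{d+3}$ (exactly as was done tacitly in the rescaled MMD propositions), the threshold reduces to the clean condition $\epsilon>1$. Then $|\hat{v}|\to 0$ as $t\to\infty$ for every admissible $\bm{\xi}$, which gives discriminator stability.

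The main obstacle I anticipate is the Fourier transform of $\|\bm{x}\|^{-(d+3)}$: since $d+3>d$ the integrand is not locally integrable near the origin, so $\mathcal{F}(e_{d+3})$ only exists as a tempered distribution via analytic continuation of the Riesz family (or, equivalently, a finite-part regularization). I would justify this the same way the text already uses analogous formal Fourier identities for the other kernels, noting that in the linearized perturbation analysis one only pairs $\mathcal{F}(\widetilde{e}_m)$ against the smooth, band-limited perturbation $\hat{v}$, so the distributional meaning is harmless. A secondary subtlety is the zero-mode $|\bm{\xi}|=0$: just as in the stabilized rational-quadratic case, this mode is either degenerate or non-stabilizable under this framework, and I would explicitly exclude it and note the parallel to the earlier remark. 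Once these two caveats are handled, the proof becomes a direct substitution into Eq.~\eqref{Eqn stable Fourier evo v}.
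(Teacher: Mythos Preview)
Your proposal is correct and follows essentially the same route as the paper: compute $\mathcal{F}(\widetilde{e}_m)$ to get $|\bm{\xi}|^{-1}-\epsilon|\bm{\xi}|^{3}$ (the paper simply states this without the Riesz-potential justification or the constants $C_1,C_2$ you track), substitute into Eq.~\eqref{Eqn stable Fourier evo v}, and then invoke the same finite-domain normalization to bound $|\bm{\xi}|$ away from zero so that $\epsilon>1$ forces the bracket negative. The one small divergence is the zero mode: you propose to exclude $|\bm{\xi}|=0$ by analogy with the rational-quadratic case, whereas the paper observes directly that $C|\bm{\xi}|^{2}\bigl(|\bm{\xi}|^{-1}-\epsilon|\bm{\xi}|^{3}\bigr)=C(|\bm{\xi}|-\epsilon|\bm{\xi}|^{5})$ vanishes at $|\bm{\xi}|=0$, so the perturbation does not grow there and no exclusion is needed.
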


\begin{proof}
    For the stabilized $\widetilde{e}_{\alpha}(\bm{x},\bm{y}) = e_{d-1}(\bm{x},\bm{y}) - \epsilon e_{d+3}(\bm{x},\bm{y})$, the Fourier transform for it is
    \begin{equation}
            \mathcal{F}( \frac{1}{\|\bm{x}- \bm{y}\|^{d-1}} - \epsilon \frac{1}{\|\bm{x}- \bm{y}\|^{d+3}} ) = \frac{1}{|\bm{\xi}|}-\varepsilon |\bm{\xi}|^{3}.
    \end{equation}

    In this case
    \begin{equation}
        \frac{d \hat{v}}{dt} =   C|\bm{\xi}|^{2}\hat{v}\mathcal{F}(e_{d-1}(\|\bm{x}\|)-\epsilon e_{d+1}(\|\bm{x}\|)) = C|\bm{\xi}|^{2}\hat{v}(\frac{1}{|\bm{\xi}|}-\varepsilon |\bm{\xi}|^{3}).
    \end{equation}

Knowing that in the training of $D$, we always normalize the input data $X$ into a finite domain, i.e., $[-1,1] \times [-1,1]$. We know that $|\bm{\xi}|$ is bounded and $\min |\bm{\xi}| = \frac{1}{2}$ for $|\bm{\xi}|$ not equal to 0. When $\epsilon > 1$, $(1 - \epsilon |\bm{\xi}|^4)<0$, the training process is stable. When $|\bm{\xi}| = 0$, $\frac{d \hat{v}}{dt} =0$. Hence, the perturbation does not grow, and the solution remains stable.

\end{proof}

\newpage
\section{Other gradient flow}
While the stability analysis in our main paper focuses on the Wasserstein gradient flow, we recognize that other types of gradient flow can also play an important role of GANs. To extend our stability analysis framework, we present examples where we consider other types of gradient flow as well.

Our proposed stability analysis framework can be applied to analyze the training stability of other types of GANs. The process of stability analysis remains similar: firstly, we identify the gradient flow function corresponding to the subject of the study. Next, we derive the perturbation evolution equation that describes how small perturbations appearing at some time during the training behave. Finally, by analyzing the perturbation evolution equation, we can gain insights into the stability properties of the GAN and the factors that influence them. This approach can provide valuable guidance for improving the training stability of GANs and enhancing their performance in practical applications.

\subsection{Stability analysis for Vanilla GANs}

In this section, we use our framework to analyze Vanilla GAN \cite{refGAN}. In Vanilla GAN, the feature space is $\{0,1\}$. In this case, the probability function $\mathbb{P}_{f_{\mathrm{data}}}\{D(\bm{x})=1, \bm{x}\in \mathcal{A}\} = 1$, $\mathbb{P}_{f_{g}}\{D(\bm{x})=0, \bm{x}\in \mathcal{B}\}=1$, where $\mathcal{A}$ stands for the dataset of data samples and $\mathcal{B}$ stands for the dataset of generated samples. In this case, the probability density function is Delta function at points $1$ and $0$, which is non-smooth. The Wasserstein gradient flow framework,
which describes the evolution of a smooth probability density function, cannot be applied to Vanilla GAN. Thus we analyze the training stability through the particle dynamics in feature space. 

The objective function of Vanilla GAN is
\begin{equation}
    \max_{D} \min_{G}  v(G,D) = \mathbb{E}_{y \sim \mathbb{P}_{\mathrm{data}}} \log[D(\bm{y})] + \mathbb{E}_{\bm{z} \sim \mathcal{N}(\bm{0},I)} \log[1 - D(G(\bm{z}))],
\end{equation}
where $D$ is discriminator and $G$ is generator.

In this case, the corresponding loss function for discriminator is
\begin{equation}
    \max_D \mathcal{L}_{D}=  \mathbb{E}_{\bm{y} \sim \mathbb{P}_{\mathrm{data}}} \log[D(\bm{y})] + \mathbb{E}_{\bm{x} \sim \mathbb{P}_g} \log[1 - D(\bm{x})].
\end{equation}

With G fixed, for a sample $\bm{x_0}$ the evolution of the samples in feature space is
\begin{equation}
    dX_t  = \bigg[\frac{p_r(\bm{x_0})}{X_t} - \frac{p_g(\bm{x_0})}{1-X_t}\bigg]dt.
\end{equation}

For discriminator, if $p_r$ and $p_g$ have disjoint support $\mathcal{A}$ and $\mathcal{B}$, i.e., $\mathcal{A} \cap \mathcal{B} = \emptyset$.
\begin{itemize} 
    \item For the case $\bm{x}_0 \in \mathcal{A}$,  the data samples dynamics is $d X_t  = \frac{p_r(\bm{x}_0)}{X_t} dt$, and the corresponding perturbation evolution equation $$\frac{d v}{dt}  = -\frac{v}{X_t^2}.$$
    \item For the case $\bm{x}_0 \in \mathcal{B}$, the generated samples dynamics is $d X_t  = - \frac{p_g(\bm{x}_0)}{1-X_t}dt $, and the corresponding perturbation evolution equation $$\frac{d v}{dt}  = -\frac{v}{(1-X_t)^2}.$$
    \item For the case $\bm{x}_0 \notin \mathcal{A}$ or $\bm{x}_0 \notin \mathcal{B}$, the dynamics is $dX_t  = 0$, and the corresponding perturbation evolution equation $$\frac{d v}{dt}  = 0.$$
\end{itemize}
Based on the above perturbation evolution equation, the training process of discriminator of Vanilla GAN is stable.

The corresponding loss function for generator is (the $-\log D$ alternative \cite{refPreWGAN})
\begin{equation}
    \min_G \mathcal{L}_{G}= \mathbb{E}_{z \sim \mathcal{N}(\bm{0},I)} \log[1 - D(G(z))].
\end{equation}

With a fixed $D$, for a generated sample $\bm{x_0}$, the evolution of the generated sample in feature space $X_t = D(\bm{x}_0)$ is

\begin{equation}
    dX_t = \bigg[\frac{p_g(\bm{x}_0)}{1-X_t}\bigg]dt.
\end{equation}
 And the corresponding perturbation evolution equation is 
 \begin{equation}
    \frac{dv}{dt} = \frac{v}{(1-X_t)^2},
\end{equation}
which indicates that the training process of generator is unstable.

Thus an alternative loss for the generator is proposed (the $-\log D$ trick \cite{refPreWGAN}.)

\begin{equation}
    \mathcal{L}_{G} =\mathbb{E}_{z \sim \mathcal{N}(\bm{0},I)} [-\log D(G(z))],
\end{equation}

and the evolution of the generated sample $\bm{x_0}$ in feature space is
\begin{equation}
    dX_t = \bigg[ \frac{p_g(\bm{x}_0)}{X_t}\bigg]dt.
\end{equation}

The corresponding perturbation evolution equation is
\begin{equation}
    \frac{dv}{dt} = -\frac{v}{X_t^2},
\end{equation}
which indicates that the training process of the alternative discrimimator is stable.

\subsection{Functional gradient flow}

We can also consider the gradient flow of discriminator $D$ to analyze the training stability. Here we use the gradient flow of $D$ to analyze the training stability of the discriminator in WGAN-GP \cite{refWGANGP}. And this example is used to illustrate adding gradient penalty in the loss function of the discriminator as a stabilizing term.

The loss function of the discriminator $D$ in the WGAN-GP is 
\begin{equation}
\mathcal{L}_D=-\underset{\bm{x} \sim \mathbb{P}_g}{\mathbb{E}}[D(\bm{x})]+\underset{\bm{y} \sim \mathbb{P}_{\mathrm{data}}}{\mathbb{E}}[D(\bm{y})]-\lambda \underset{\hat{\boldsymbol{x}} \sim \mathbb{P}_{\hat{\boldsymbol{x}}}}{\mathbb{E}}\left[\left(\left\|\nabla_{\hat{\boldsymbol{x}}} D(\hat{\boldsymbol{x}})\right\|_2-1\right)^2\right]
\end{equation}

For training of the discriminator $\min \mathcal{L}_D$, we consider the gradient flow of the discriminator
\begin{equation}
        \frac{\partial D}{\partial t} = \frac{\delta \mathcal{L}_{D}}{\delta D} = p_r - p_g+ \lambda(\Delta D - 2 \nabla(\frac{\nabla D}{\|\nabla D \|} )) (\epsilon p_r + (1-\epsilon)p_g),
\end{equation}

The corresponding perturbation evolution equation is
\begin{equation}
\begin{aligned}
    \frac{\partial v}{\partial t} =  \lambda \Delta v (\epsilon p_r(x_0) + (1-\epsilon)p_g(x_0))
\end{aligned}
\label{Eqn: WGAN perturbation evo}
\end{equation}

Taking Fourier transform on both sides of Eqn.(\ref{Eqn: WGAN perturbation evo}), we have
\begin{equation}
    \frac{d \hat{v}}{dt} =
    - \lambda (\epsilon p_r(x_0) + (1-\epsilon)p_g(x_0))|\bm{\xi}|^2 \hat{v},
\end{equation}
which indicates that the evolution of $W^2$ during the training is neutrally stable.

From the above analysis, we know that gradient penalty is also a kind of stabilizing term that can be added in discriminator loss function. And also, with the gradient penalty in the discriminator loss function, the gradient flow of $D$ has a Laplacian term which causes the discriminator to become overly smooth and the generated samples may become connected, leading to mode collapse. The experiment results shown in Fig.\ref{Fig.sub.2g} validate this view.

\subsection{Parameter gradient flow}

To take the structure of the neural network into account in the training stability analysis, we can consider the gradient flow of the corresponding parameter. We can consider a simple discriminator made of a fully connected network without bias term, with input $\bm{x}$:
\begin{equation}
D(\bm{x},\phi) = W^{L+1}a_{L}(W^{L}(a_{L-1}(W^{L-1}(...a_1(W^{1}\bm{x}...))))),
\end{equation}
where $\phi := \{W^{1},...,W^{L},W^{L+1}\}$ is the learning parameters set, $W^{l} \in \bm{R}^{d_l \times d_{l-1}}, W^{L+1} \in \bm{R}^{1 \times d_L}$, and $a_l$ is an piece-wise linear linear activation function. To illustrate the idea, we consider a simple case with a single layer neural network with structure
\begin{equation}
    D(\bm{x},\phi) =  W^{2}a_{1}(W^1\bm{x})
\end{equation}

For training of the discriminator with $\max \mathcal{L}_D$, we consider the gradient flow for parameter $W^2$ is
\begin{equation}
    \frac{\partial W^2}{\partial t} = \frac{d \mathcal{L}_D}{dW^2} = \frac{\delta \mathcal{L}_D}{\delta D} \bm{A}_1(W^{1}\bm{x}),
\end{equation}
where $\bm{A}_1(W^{1}\bm{x}) \in \mathbb{R}^{d1 \times d2}$ is from the derivative of $\frac{\partial D}{\partial W^2}$. Considering perturbations $v\ll1$ appearing during the training, the perturbation evolution equation is
\begin{equation}
   \| \frac{d v}{d t} \|= 0,
\end{equation}
which indicates that the evolution of $W^2$ during the training is neutrally stable.

One of popular stabilizing methods is spectral normalization \cite{refSNGAN}, where $\hat{W}_{SN}(W):= W/\sigma(W)$ and $\sigma(W)$ is the spectral norm of the matrix. 
In this case the gradient flow for the last layer parameter $W^2$ is

\begin{equation}
    \frac{\partial W^2}{\partial t} = \frac{\delta \mathcal{L}_D}{\delta D} \frac{\bm{A}_1(W^{1}\bm{x})}{\sigma(W^2)}.
    \label{Eqn parameterGflow}
\end{equation}
Considering the perturbation evolution equation

\begin{equation}
    \|\frac{dv}{dt}\| = - |\frac{\delta L_D}{\delta D}| \|a_1(W^1\bm{x})\|\frac{1}{\sigma(W^2)^2}\|v\|,
\end{equation}
which indicates that the evolution of $W^2$ during the training is stable.

\subsection{Future Work}

From some of the above discussion, we can know that we can generalize the framework of our stability analysis. If we consider the network structure, we can use the parameter gradient flow (Eq. (\ref{Eqn parameterGflow})) for the stability analysis. In addition to this, we can also take into account the effect of the optimization algorithm used on stability by considering discrete time dynamics.

\newpage
\section{Experimental details} 

In this section, we include more details about the experiments done in the main paper. We show neural network architecture and hyper-parameter settings for them. And we also show that All experiments are conducted on Python 3.7 with NVIDIA 2080 Ti. 

\subsection{Gaussian Mixture}

For Gaussian Mixture, we sample a 2-d 8-cluster Gaussian Mixture distributed in a circle, where cluster means are sampled from $\mathcal{N}(I, \bm{0})$, the marginal probability of each cluster is $1/8$. For MMD GAN and MMD GAN-GP, the linear combination of Gaussian RBF kernel, i.e., $e_{rbf}(\bm{x},\bm{y}) = \sum_{i = 1}^{K}e_{\sigma_i}(\bm{x},\bm{y})$, where $K = 3$ and $\sigma_{i} = \{4,8,16\}$. 
In stabilized MMD GAN, the stabilizing term, we set $s(\bm{x},\bm{y}) = \sum_{i = 1}^{K}e_{\sigma_i}(\bm{x},\bm{y})$, where $\sigma_{i} = \{1,\sqrt{2},2\}$. For both generator and discriminator, we use Adam to train with learning rate $lr = 5e-3$ and $(\alpha,\beta) = (0.5,0.9)$ for $3000$ epochs.

For this case, we use multi-layer perceptron (MLP) networks.
\begin{itemize}
    \item The MLP discriminator takes a 2-dimensional tensor as the input. Its architecture has a set of fully-connected layers (fc marked with input-dimension and output-dimension) and LeakyReLU layers (hyperparameter set as 0.2): \textsl{fc (2 $\rightarrow$ 100), LeakyReLU, fc (100 $\rightarrow$ 50), LeakyReLU, fc (50 $\rightarrow$ 16)}.
    \item  The MLP generator network takes a 2-dimensional random Gaussian variables as the input. Its architecture: \textsl{fc (2 $\rightarrow$ 100), LeakyReLU, fc (100 $\rightarrow$ 50), LeakyReLU, fc (50 $\rightarrow$ 2)}.
\end{itemize}

\subsection{Image generation}

For image generation, we use the dataset CIFAR-10. For this case, we use convolutional neural networks (CNN). For both generator and discriminator, we use Adam to train with learning rate $lr = 5e-5$ and $(\alpha,\beta) = (0.5,0.9)$ for $50$ epochs with batchsize $B = 64$. And we also train $n_c=5$ times discriminator per generator.

\begin{itemize}
    \item The CNN elastic discriminator  takes a $B \times C \times H \times W$ tensor as the input. Its architecture has a set of convolution layers (conv marked with input-c, output-c, kernel-size, stride, padding), Batch Normalization layers (BN) and LeakyReLU layers (hyperparameter as 0.2): \textsl{conv (3,64,4,2,1), LeakyReLU, conv (64,128,4,2,1), BN, LeakyReLU, conv (128,256,4,2,1), BN, LeakyReLU, conv (256,512,4,2,1), BN, LeakyReLU, conv (512,128,4,2,1)}.
    \item  The CNN generator network given a $100$ dimensional random Gaussian variables: \textsl{conv (100,256,4,2,0), BN, ReLU, conv (256,128,4,2,1), BN, ReLU, conv (128,64,4,2,1), BN, ReLU, conv (64,32,4,2,1), Tanh}.
\end{itemize}

\textbf{Quantitative analysis} We also evaluate the FID scores, for MMD GAN: 64.72 and for stabilized MMD GAN: 48.61. The Inception Score, for MMD GAN: 6.14 and for stabilized MMD GAN: 6.8489.

\textbf{Generated samples for CIFAR-10} The generated samples are shown in Fig.\ref{Fig.Image}.

\begin{figure}[!hbtp]
        \centering 
	\begin{subfigure}[t]{0.49\textwidth}
		\centering
		\includegraphics[width= 1\textwidth]{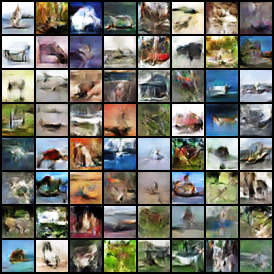}
		\caption{MMD GAN.}
		\label{Fig.AP1.1}
	\end{subfigure}
	\begin{subfigure}[t]{0.49\textwidth}
		\centering
		\includegraphics[width= 1\textwidth]{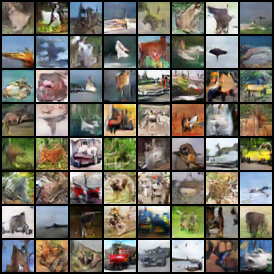}
		\caption{Stabilized MMD GAN.}
		\label{Fig.AP1.2}
	\end{subfigure}
\caption{Uncurated samples on CIFAR-10 datasets. }
\label{Fig.Image}
\end{figure}	

\subsection{Experiments on MMD GAN with rational quadratic kernel}
We also conduct experiments on MMD GAN with rational quadratic kernel with CIFAR-10 to show its instability training process.

\begin{figure}[!hbtp]
        \centering 
	\begin{subfigure}[t]{0.85\textwidth}
		\centering
		\includegraphics[width= 1\textwidth]{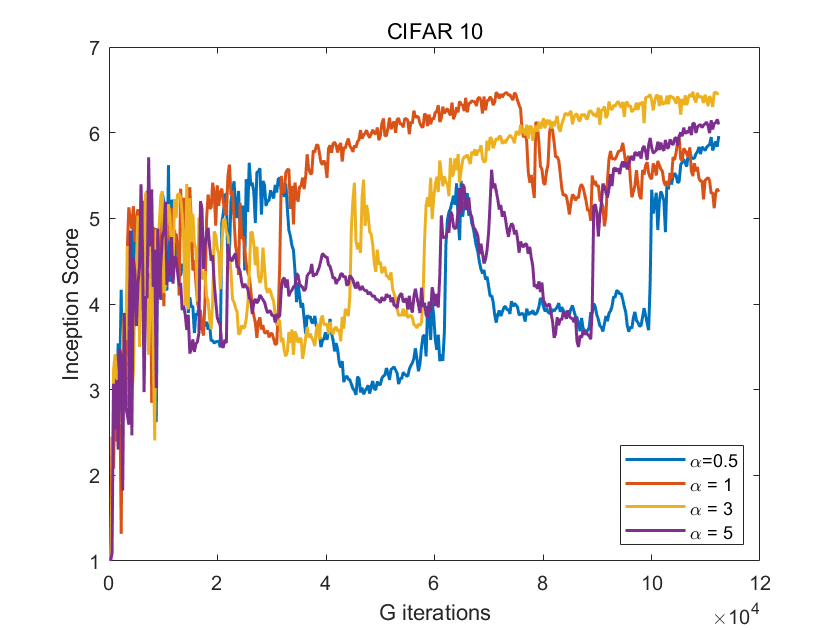}
		\label{Fig.AP11}
	\end{subfigure}
\caption{Training Curves on CIFAR10 of MMD GAN with rational quadratic kernel.}
\label{Fig. Image}
\end{figure}

\noindent

\end{appendices}


\end{document}